\newcommand{\citet}[1]{\citeauthor{#1} \shortcite{#1}}
\titleformat{\subsubsection}[runin]{\normalfont\normalsize\bfseries}{\thesubsubsection}{1em}{}[.\;\;]
\pgfplotsset{compat=1.17}
\newtheorem{theorem}{Theorem}
\newtheorem{definition}{Definition}
\newtheorem{lemma}[theorem]{Lemma}
\title{Recognizable Information Bottleneck}
\author{
Yilin Lyu$^1$\and
Xin Liu$^1$\and
Mingyang Song$^1$\and
Xinyue Wang$^1$\and\\
Yaxin Peng$^2$\and
Tieyong Zeng$^{3}$\And
Liping Jing$^1$\thanks{Corresponding author.}
\affiliations
$^1$Beijing Key Lab of Traffic Data Analysis and Mining, Beijing Jiaotong University\\
$^2$Department of Mathematics, School of Science, Shanghai University\\
$^3$The Chinese University of Hong Kong
\emails
\{yilinlyu, xin.liu, mingyang.song, xinyuewang, lpjing\}@bjtu.edu.cn,\\
yaxin.peng@shu.edu.cn,
zeng@math.cuhk.edu.hk
}
\begin{document}

\maketitle
\let\thefootnote\relax\footnote{Preprint.}

\begin{abstract}
    Information Bottlenecks (IBs) learn representations that generalize to unseen data by information compression. However, existing IBs are practically unable to guarantee generalization in real-world scenarios due to the vacuous generalization bound. The recent PAC-Bayes IB uses information complexity instead of information compression to establish a connection with the mutual information generalization bound. However, it requires the computation of expensive second-order curvature, which hinders its practical application. In this paper, we establish the connection between the recognizability of representations and the recent functional conditional mutual information (\textit{f}-CMI) generalization bound, which is significantly easier to estimate. On this basis we propose a Recognizable Information Bottleneck (RIB) which regularizes the recognizability of representations through a recognizability critic optimized by density ratio matching under the Bregman divergence. Extensive experiments on several commonly used datasets demonstrate the effectiveness of the proposed method in regularizing the model and estimating the generalization gap.
\end{abstract}
\section{Introduction}
\label{sec:intro}
Learning representations that are generalized to unseen data is one of the most fundamental goals for machine learning. The IB method \cite{DBLP:journals/corr/physics-0004057} is one of the prevailing paradigms. It centers on learning representations that not only sufficiently retain information about the label, but also compress information about the input.
The rationale behind the IB is that the information compression of representations penalizes complex hypotheses and guarantees generalization by a probability generalization bound \cite{DBLP:journals/tcs/ShamirST10,DBLP:conf/isit/VeraPV18}. It is expected that one can obtain a lower generalization gap through optimizing the IB training objective.
However, in finite and continuous data scenarios, such as an image classification task with deep neural networks (DNNs), this upper bound is dominated by a prohibitively large constant with any reasonable probability and becomes vacuous, thus making the information compression term negligible \cite{Rodriguez_Galvez_2019}.
This deprives IB method of the theoretical guarantee in real-world scenarios.

Recently, motivated by the mutual information (MI)-based generalization bound in the information-theoretic framework \cite{DBLP:conf/nips/XuR17}, \citet{wang2022pacbayes} proposed to use the \textit{information stored in weights}, i.e., the MI between the input dataset and the output weights of the algorithm, to replace the compression term in the IB objective.
It is worth noting, however, that the MI-based generalization bound may still be vacuous due to the fact that the MI can easily go to infinity, e.g., in cases where the learning algorithm is a deterministic function of the input data \cite{DBLP:journals/jsait/BuZV20,DBLP:conf/colt/SteinkeZ20}.
More importantly, it is notoriously difficult to estimate the MI between two high-dimensional random variables, e.g., weights of model and a training dataset.
Although one can recover non-vacuous bound by restricting the prior and posterior of the weights to Gaussian distributions \cite{DBLP:conf/uai/DziugaiteR17,wang2022pacbayes}, the computational burdens (from e.g., Monte-Carlo sampling or calculating the second-order curvature information) are still cumbersome.

To address the unbounded nature of MI-based bound, \citet{DBLP:conf/colt/SteinkeZ20} proposed the celebrated conditional mutual information (CMI) generalization bound, which normalizes each datum to one bit, thus is always bounded. The intuition is that measuring information that ``recognizes'' an input sample is much simpler than measuring information that ``reconstructs'' an input sample.
\citet{DBLP:conf/nips/HarutyunyanRSG21} further extended this idea to a black-box algorithm setting and proposed a functional CMI (\textit{f}-CMI) generalization bound, which significantly reduces the computational cost by computing the CMI with respect to the low-dimensional predictions rather than the high-dimensional weights.
This inspires us to consider the DNN encoder as a black box and the representation as the output predictions, which coincides with the original IB setting.
Unfortunately, the computation of the \textit{f}-CMI bound requires multiple runs on multiple draws of data, there is not yet a practical optimization algorithm to incorporate it into the training of the models.

In this paper, we first formulate the \textit{recognizability} from the perspective of \textit{binary hypothesis testing} (BHT) and show the connection between the \textit{recognizability of representations} and the \textit{f}-CMI generalization bound. Based on the reasoning that lower recognizability leads to better generalization, we propose a new IB objective, called the Recognizable Information Bottleneck (RIB).
A recognizability critic is exploited to optimize the RIB objective using \textit{density ratio matching} under the Bregman divergence.
Extensive experiments on several commonly used datasets demonstrate the effectiveness of the RIB in controlling generalization and the potential of estimating the generalization gap for the learned models using the recognizability of representations.

Our main contributions can be concluded as follows:
\begin{itemize}
    \item We define a formal characterization of recognizability and then show the connection between the recognizability of representations and the \textit{f}-CMI bound.
    \item We propose the RIB objective to regularize the recognizability of representations.
    \item We propose an efficient optimization algorithm of RIB which exploits a recognizability critic optimized by density ratio matching.
    \item We empirically demonstrate the regularization effects of RIB and the ability of estimating the generalization gap by the recognizability of representations.
\end{itemize}

\subsection{Other Related Work}
\phantomsection
\subsubsection{Information bottlenecks}
There have been numerous variants of IB \cite{DBLP:conf/iclr/AlemiFD017,DBLP:journals/entropy/KolchinskyTW19,DBLP:journals/entropy/GalvezTS20,DBLP:conf/aaai/Pan00021,DBLP:conf/icassp/YuYP21} and have been utilized on a wide range of machine learning tasks \cite{DBLP:conf/eccv/DuXXQZS020,DBLP:conf/ijcai/Lai0ZLS021,DBLP:conf/aaai/LiSWZRLK022}. However, one pitfall of IBs is that there may be no causal connection between information compression and generalization in real-world scenarios \cite{DBLP:conf/iclr/SaxeBDAKTC18,Rodriguez_Galvez_2019,DBLP:conf/nips/DuboisKSV20}. \citet{wang2022pacbayes} addressed this issue by introducing an information complexity term to replace the compression term, which turns out to be the Gibbs algorithm stated in the PAC-Bayesian theory \cite{Catoni_2007,DBLP:conf/nips/XuR17}.
\phantomsection
\subsubsection{Information-theoretical generalization bounds} The MI between the input and output of a learning algorithm has been widely used to bound the generalization error \cite{DBLP:conf/nips/XuR17,DBLP:conf/isit/PensiaJL18,DBLP:conf/nips/NegreaHDK019,DBLP:journals/jsait/BuZV20}. To tackle the intrinsic defects of MI, \citet{DBLP:conf/colt/SteinkeZ20} proposed the CMI generalization bound, then \citet{DBLP:conf/nips/HaghifamNK0D20} proved that the CMI-based bounds are tighter than MI-based bounds. \citet{DBLP:journals/tit/ZhouTL22} made the CMI bound conditioned on an individual sample and obtained a tighter bound. The above bounds are based on  the output of the algorithm (e.g. weights of the model) rather than information of predictions which is much easier to estimate in practice \cite{DBLP:conf/nips/HarutyunyanRSG21}.

\section{Preliminaries}
\subsection{Notation and Definitions}
We use capital letters for random variables, lower-case letters for realizations and calligraphic letters for domain.
Let $X$ and $Y$ be random variables defined on a common probability space, the mutual information between $X$ and $Y$ is defined as $I(X;Y)=D_\mathrm{KL}(P_{X,Y} \Vert P_{X} \otimes P_{Y})$, where $D_\mathrm{KL}$ is the Kullback-Leibler (KL) divergence. The conditional mutual information is defined as $I(X;Y|Z)=D_\mathrm{KL}(P_{X,Y|Z} \Vert P_{X|Z} P_{Y|Z}|P_Z)=\mathbb{E}_{z\sim P_Z}\left[ I(X;Y|Z=z) \right]$.
A random variable $X$ is $\sigma$-subgaussian if 
$\mathbb{E}\left[ \exp (\lambda(X-\mathbb{E}(X))) \right] \le \exp \left({\lambda^2 \sigma^2}/{2}\right)$ for all $\lambda \in \mathbb{R}$.

Throughout this paper, we consider the supervised learning setting.
There is an instance random variable $X$, a label random variable $Y$, an unknown data distribution $\mathcal{D}$ over the $\mathcal{Z} = \mathcal{X} \times \mathcal{Y}$ and a training dataset $S=(Z_1,Z_2,\dots,Z_n)\sim \mathcal{D}^n$ consisting of $n$ i.i.d. samples drawn from the data distribution.
There is a learning algorithm $A: \mathcal{Z}^n \rightarrow \mathcal{W}$ and an encoder function $f_{\theta}: \mathcal{X} \rightarrow \mathcal{T}$ from its class $\mathcal{F}:=\{T=\mathbb{E}_{X}[\mathbf{1}[t=f_{\theta}(X)]]:\theta \in \mathcal{W}\}$. We assume that $\mathcal{T}$ is of finite size and consider $T$ as the representation.
We define two expressions of generalization, one from the ``weight'' perspective and one from the ``representation'' perspective:
\begin{enumerate}
    \item Given a loss function $\ell: \mathcal{W} \times \mathcal{Z} \rightarrow \mathbb{R}$, the true risk of the algorithm $A$ w.r.t. $\ell$ is $\mathcal{L}(w,S)=\mathbb{E}_{z^\prime \sim \mathcal{D}} \ell(w,z^\prime)$ and the empirical risk of the algorithm $A$  w.r.t. $\ell$ is $\mathcal{L}_{\mathrm{emp}}(w,S)=\frac{1}{n}\sum_{i=1}^{n} \ell(w,z_i)$. The generalization error is defined as $\mathrm{gen}(w,S)=\mathcal{L}(w,S) - \mathcal{L}_{\mathrm{emp}}(w,S)$;
    \item Given a loss function $\ell: \mathcal{T} \times \mathcal{Y} \rightarrow \mathbb{R}$, the true risk of the algorithm $A$ w.r.t. $\ell$ is $\mathcal{L}(T,Y)=\mathbb{E}_{z^\prime \sim \mathcal{D}} \ell(t^\prime,y^\prime)$ and the empirical risk of the algorithm $A$  w.r.t. $\ell$ is $\mathcal{L}_{\mathrm{emp}}(T,Y)=\frac{1}{n}\sum_{i=1}^{n} \ell(t_i,y_i)$. The generalization error is defined as $\mathrm{gen}(T,Y)=\mathcal{L}(T,Y) - \mathcal{L}_{\mathrm{emp}}(T,Y)$.
\end{enumerate}

\subsection{Generalization Guarantees for Information Bottlenecks}

The IB method \cite{DBLP:journals/corr/physics-0004057} wants to find the optimal representation by minimizing the following Lagrangian:
\begin{equation}
    \label{eq:ib}
    \mathcal{L}_{\mathrm{IB}} = \mathcal{L}_{\mathrm{emp}}(T,Y) + \beta I(T;X),
\end{equation}
where $\beta$ is the Lagrange multiplier.
The $I(T;X)$ in \cref{eq:ib} can be seen as a compression term that regularizes the complexity of $T$. It has been demonstrated to upper-bound the generalization error \cite{DBLP:journals/tcs/ShamirST10,DBLP:conf/isit/VeraPV18}. Concretely, for any given $\delta > 0$, with probability at least $1-\delta$ it holds that
\begin{equation*}
    \label{eq:ib_bound}
    \left|\mathrm{gen}(T, Y)\right| \le \mathcal{O}(\frac{\log n}{\sqrt{n}}) \sqrt{I(T;X)} + C_{\delta},
\end{equation*}
where $C_{\delta}=\mathcal{O}(|\mathcal{T}|/\sqrt{n})$ is a constant that depends on the cardinality of the space of $T$. The bound implies that \textbf{the less information the representations can provide about the instances, the better the generalization of the algorithm}.
However, since $|\mathcal{T}|$ will become prohibitively large with any reasonable choice of $\delta$, this bound is actually vacuous in practical scenarios \cite{Rodriguez_Galvez_2019}.

A recent approach PAC-Bayes IB (PIB) \cite{wang2022pacbayes} tried to amend the generalization guarantee of IB by replacing the compression term with the information stored in weights, which leads to the following objective:
\begin{equation*}
    \label{eq:pib}
    \mathcal{L}_{\mathrm{PIB}} = \mathcal{L}_{\mathrm{emp}}(w,S) + \beta I(w;S).
\end{equation*}
The rationale behind the PIB is stem from the input-output mutual information generalization bound which can also be derived from the PAC-Bayesian perspective \cite{DBLP:conf/nips/XuR17,DBLP:conf/isit/BanerjeeM21}. Concretely, assume that the loss function $\ell(w,Z)$ is $\sigma$-subgaussian for all $w \in \mathcal{W}$, the expected generalization error can be upper-bounded by
\begin{equation*}
    \label{eq:pib_bound}
    \left| \mathbb{E}\left[\mathrm{gen}(w,S)\right] \right| \le \sqrt{\frac{2\sigma^2}{n} I(W;S)}.
\end{equation*}
Intuitively, the bound implies that \textbf{the less information the weight can provide about the input dataset, the better the generalization of the algorithm}.
However, as we mentioned above, the MI-based bound could be vacuous without proper assumptions and is hard to estimate for two high-dimensional variables such as the weight $W$ and the training dataset $S$.
\subsection{The CMI-based Generalization Bounds}
The CMI-based bounds \cite{DBLP:conf/colt/SteinkeZ20,DBLP:conf/nips/HarutyunyanRSG21} characterize the generalization by measuring the ability of recognizing the input data given the algorithm output. This can be accomplished by recognizing the training samples from the ``ghost'' (possibly the test) samples.
Formally, consider a supersample $\tilde{Z}\in \mathcal{Z}^{n \times 2}$ constructed from $n$ input samples mixed with $n$ ghost samples. A selector variable $U \sim \mathrm{Unif}(\{0,1\}^n)$ selects the input samples from the supersample $\tilde{Z}$ to form the training set $S=\tilde{Z}_U=(\tilde{Z}_{i,U_i+1})_{i=1}^n$. Given a loss function $\ell(w,z) \in [0,1]$ for all $w \in \mathcal{W}$ and all $z \in \mathcal{Z}$,
\citet{DBLP:conf/colt/SteinkeZ20} bound the expected generalization error by the CMI between the weight $W$ and the selector variable $U$ given the supersample $\tilde{Z}$:
\begin{equation}
    \label{eq:cmi_bound}
    \left| \mathbb{E}\left[\mathrm{gen}(w,S)\right] \right| \le \sqrt{\frac{2}{n} I(W;U|\tilde{Z})}.
\end{equation}
The intuition is that \textbf{the less information the weight can provide to recognize input samples from their ghosts, the better the generalization of the algorithm}.
In this way, the algorithm that reveals a datum has only CMI of one bit, bounding the CMI by $n \log 2$, while the MI may be infinite.
\citet{DBLP:conf/nips/HarutyunyanRSG21} extended this idea to a black-box algorithm setting which measures the CMI with respect to the predictions of the model rather than the weight. It is able to further extend the prediction domain to the representation domain, which yields a setting similar to the original IB.
Given a loss function $\ell(t,y) \in [0,1]$ for all $t \in \mathcal{T}$ and all $z \in \mathcal{Z}$, the expected generalization error is upper-bounded by the following \textit{f}-CMI bound:
\begin{equation}
    \label{eq:rib_bound}
    \left| \mathbb{E}\left[\mathrm{gen}(T,Y)\right] \right| \le \sqrt{\frac{2}{n} I(T;U|\tilde{Z})}.
\end{equation}
From the Markov chain $U \rightarrow W \rightarrow T$ and the data processing inequality, it follows that $I(T;U|\tilde{Z}) \le I(W;U|\tilde{Z})$, which indicates the bound in \cref{eq:rib_bound} is tighter than the bound in \cref{eq:cmi_bound}. Similarly, we can learn from the bound that \textbf{the less information the representations can provide to recognize input samples from their ghosts, the better the generalization of the algorithm}. Since the representation $T$ and the selector variable $U$ are relatively lower dimensional than the weight $W$ and the dataset $S$, respectively, estimating the \textit{f}-CMI bound is far more efficient than the previous bounds.

\section{Methodology}
Although the \textit{f}-CMI bound is theoretically superior to the generalization bounds of existing IBs, it has so far only been used as a risk certificate for the learned model \cite{DBLP:conf/nips/HarutyunyanRSG21}. Since computing the \textit{f}-CMI bound requires multiple runs on multiple draws of data, there is no immediate way to incorporate it into the training of deep models.
In this section, we first give a formal characterization of the recognizability and describe how to use the recognizability of representations to approximate the \textit{f}-CMI bound. This motivates us to propose the RIB objective and an efficient optimization algorithm using a recognizability critic.

\subsection{Recognizability}
In order to characterize the recognizability, we first review some basic concepts of binary hypothesis testing (BHT) \cite{Levy_2008}.
Assume that there are two possible hypotheses, $H_0$ and $H_1$, corresponding to two possible distributions $P$ and $Q$ on a space $\mathcal{X}$, respectively,
$$ H_0: X \sim P \qquad H_1: X \sim Q. $$
A binary hypothesis test between two distributions is to choose either $H_0$ or $H_1$ based on the observation of $X$.
Given a decision rule $\Xi: \mathcal{X} \rightarrow \{0,1\}$ such that $\Xi=0$ denotes the test chooses $P$, and $\Xi=1$ denotes the test chooses $Q$.
Then the recognizability can be defined as follows:
\begin{definition}[Recognizability]
    Let $H_0,H_1,P,Q,X,\Xi$ be defined as above. The recognizability between $P$ and $Q$ w.r.t. $\Xi$ for all randomized tests $P_{\Xi | X}$ is defined as
    \begin{multline*}
        \Re_\Xi(P,Q) := \\
        \lvert \left\{(\mathrm{Pr}(\Xi=1 | H_0), \mathrm{Pr}(\Xi=1 | H_1)) \mid \forall \Xi \sim P_{\Xi | X} \right\} \rvert,
    \end{multline*}
    where $\mathrm{Pr}(\Xi=1 | H_0)$ is the probability of error given $H_1$ is true and $\mathrm{Pr}(\Xi=1 | H_1)$ is the probability of success given $H_0$ is true.
\end{definition}

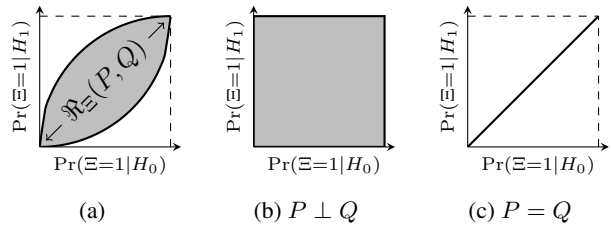
\begin{figure}
    \centering
    \begin{subfigure}[b]{0.16\textwidth}
        \centering
        \begin{tikzpicture}
            \begin{axis}[xmin=0, xmax=1.08, ymin=0, ymax=1.08,
                    ticks=none, axis lines=left, scale=0.33,
                    axis equal image,
                    xlabel = {$\scriptstyle \mathrm{Pr}(\Xi=1 | H_0)$},
                    ylabel = {$\scriptstyle \mathrm{Pr}(\Xi=1 | H_1)$},
                    ylabel near ticks,
                    xlabel near ticks,
                ]
                \addplot [name path=A,black,domain=0:1,smooth,thick] {(1-(x-1)^2)^0.5};
                \addplot [name path=B,black,domain=0:1,smooth,thick] {-(1-x^2)^0.5+1};
                \addplot [gray!50] fill between[of=A and B];
                \addplot [dashed] coordinates {(1,0)  (1,1)};
                \addplot [dashed] coordinates {(0,1)  (1,1)};
                \addplot [<->, black] coordinates {(0.05,0.05)  (0.95,0.95)};
                \node[rotate=45,fill=gray!50] at (axis cs:0.5,0.5) {$\Re_\Xi(P,Q)$};
            \end{axis}
        \end{tikzpicture}
        \caption{}
        \label{fig:recog1}
    \end{subfigure}%
    \begin{subfigure}[b]{0.16\textwidth}
        \centering
        \begin{tikzpicture}
            \begin{axis}[xmin=0, xmax=1.08, ymin=0, ymax=1.08,
                    ticks=none, axis lines=left, scale=0.33,
                    axis equal image,
                    xlabel = {$\scriptstyle \mathrm{Pr}(\Xi=1 | H_0)$},
                    ylabel = {$\scriptstyle \mathrm{Pr}(\Xi=1 | H_1)$},
                    ylabel near ticks,
                    xlabel near ticks,
                ]
                \draw[black,thick,fill=gray!50] (axis cs:0,0) rectangle (axis cs:1,1);
                % \addplot [dashed] coordinates {(1,0)  (1,1)};
                % \addplot [dashed] coordinates {(0,1)  (1,1)};
            \end{axis}
        \end{tikzpicture}
        \caption{$P \perp Q$}
        \label{fig:recog2}
    \end{subfigure}%
    \begin{subfigure}[b]{0.16\textwidth}
        \centering
        \begin{tikzpicture}
            \begin{axis}[xmin=0, xmax=1.08, ymin=0, ymax=1.08,
                    ticks=none, axis lines=left, scale=0.33,
                    axis equal image,
                    xlabel = {$\scriptstyle \mathrm{Pr}(\Xi=1 | H_0)$},
                    ylabel = {$\scriptstyle \mathrm{Pr}(\Xi=1 | H_1)$},
                    ylabel near ticks,
                    xlabel near ticks,
                ]
                \addplot [black,thick] coordinates {(0,0)  (1,1)};
                \addplot [dashed] coordinates {(1,0)  (1,1)};
                \addplot [dashed] coordinates {(0,1)  (1,1)};
            \end{axis}
        \end{tikzpicture}
        \caption{$P=Q$}
        \label{fig:recog3}
    \end{subfigure}
    \caption{Illustrations of recognizability.}
    \label{fig:recog}
\end{figure}

\phantomsection
\subsubsection{Remark}
In words, the recognizability is the area of the achievable region for all tests drawn from $P_{\Xi|X}$ (see \cref{fig:recog1}).
For BHT, one would like to select a test such that $\mathrm{Pr}(\Xi=1 | H_0)$ is as close to zero as possible while $\mathrm{Pr}(\Xi=1 | H_1)$ is as close to one as possible. However, the extreme case, i.e., $(\mathrm{Pr}(\Xi=1 | H_0)=0,\mathrm{Pr}(\Xi=1 | H_1)=1)$ or $(0,1)$, is achievable only if the hypotheses $H_0$ and $H_1$ are completely mutually exclusive, that is, $P$ is mutually singular w.r.t. $Q$, which indicates that the recognizability $\Re_\Xi(P,Q)=1$ (\cref{fig:recog2}). Similarly, when $H_0$ and $H_1$ are equivalent, that is, $P$ and $Q$ are identical, we have the recognizability $\Re_\Xi(P,Q)=0$ (\cref{fig:recog3}).

By the celebrated Neyman-Pearson lemma, we have that log-likelihood ratio testing (LRT) is the optimal decision rule, which means all the points in the achievable region are attained by LRT \cite{Neyman_Pearson_Pearson_1933}.
It is also well-known that the upper boundary between the achievable and unachievable region is the receiver operating characteristic (ROC) of LRT \cite{Levy_2008}. Therefore, given a LRT decision rule $\xi:\mathcal{X} \rightarrow [0,1]$, the recognizability w.r.t. $\xi$ can be computed by 
\begin{equation}
    \label{eq:recog_equation}
    \Re_\xi(P,Q) = 2 \times \mathrm{AUCROC}_\xi - 1,
\end{equation}
where $\mathrm{AUCROC}_\xi$ is the area under the ROC curve of $\xi$.

The formulation of recognizability offers us some fresh insights to consider generalization from the perspective of BHT, which leads to a new approach to regularization.
Concretely, we rewrite the \textit{f}-CMI term in \cref{eq:rib_bound} as
\begin{align}
    \label{eq:fcmi_2}
    I(T;U|\tilde{Z}) & =\mathbb{E}_{\tilde{z} \sim \tilde{Z}} D_\mathrm{KL}(P_{T,U|\tilde{Z}=\tilde{z}} \Vert P_{T|\tilde{Z}=\tilde{z}} \otimes P_{U})  \nonumber \\
                     & =\mathbb{E}_{\tilde{z} \sim \tilde{Z}} D_\mathrm{KL}(P_{T|U,\tilde{Z}=\tilde{z}} \Vert P_{T|\tilde{Z}=\tilde{z}}),
\end{align}
where \cref{eq:fcmi_2} follows by the uniformity of $U$. Consider a realization of supersample $\tilde{z}$. Let $t_{1},t_{2},\dots,t_{n}$ be a sequence drawn i.i.d. according to $P_{T|U,\tilde{Z}=\tilde{z}}$, then by the weak law of large numbers we have that
\begin{equation*}
    \label{eq:lrt}
    \frac{1}{n} \sum_{i=1}^{n} \log \frac{P_{T|U,\tilde{Z}=\tilde{z}}(t_{i})}{P_{T|\tilde{Z}=\tilde{z}}(t_i)} \rightarrow D_\mathrm{KL}(P_{T|U,\tilde{Z}=\tilde{z}} \Vert P_{T|\tilde{Z}=\tilde{z}}).
\end{equation*}
This suggests that the log-likelihood ratio between the distributions $P_{T|U,\tilde{Z}=\tilde{z}}$ and $P_{T|\tilde{Z}=\tilde{z}}$ amounts to the KL divergence, and eventually an estimate of $I(T;U|\tilde{z})$, which is the single-draw of the \textit{f}-CMI bound.
Thus, the more similar these two distributions are, the more difficult it is for us to determine which hypothesis $t_i$ originates from by LRT, namely, the lower the recognizability; we can also verify, in turn, that the KL divergence is minimal when we achieve the lowest recognizability (\cref{fig:recog3}).
To make this explicit, we theoretically establish the connection between recognizability of representations and the \textit{f}-CMI bound through the following theorem.
\begin{theorem}
    \label{prop1}
    Let $\tilde{z}$ be a realization of supersample $\tilde{Z}$, $T$ be the representation variable defined as above, $\xi:\mathcal{T} \rightarrow [0,1]$ be the LRT decision rule and $U \sim \mathrm{Unif}(\{0,1\}^n)$. The recognizability of representations $\Re_\xi(P_{T|\tilde{Z}=\tilde{z}},P_{T|U,\tilde{Z}=\tilde{z}})$ is upper bound by $I(T;U|\tilde{z}) + \log \frac{e}{2}$.
\end{theorem}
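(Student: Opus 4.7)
The plan is to reduce the theorem to a pointwise recognizability-vs-KL inequality. By \cref{eq:recog_equation} the recognizability is $2\,\mathrm{AUCROC}_\xi(P,Q) - 1$, and by \cref{eq:fcmi_2} one has $I(T;U|\tilde z) = \mathbb{E}_{u}\bigl[D_\mathrm{KL}(P_{T|U=u,\tilde z}\Vert P_{T|\tilde z})\bigr]$. So it suffices to prove, for each realisation $u$ of $U$, the pointwise inequality
\begin{equation*}
    \Re_\xi\bigl(P_{T|\tilde z},\,P_{T|U=u,\tilde z}\bigr) \;\le\; D_\mathrm{KL}\bigl(P_{T|U=u,\tilde z}\,\Vert\,P_{T|\tilde z}\bigr) + \log(e/2),
\end{equation*}
and then average over $U$.

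To control $\mathrm{AUCROC}$ by the KL divergence, I would write it as $\mathbb{E}_Q[\phi(T)]$ for a $[0,1]$-valued statistic $\phi$ derived from the LRT --- for example, the probability-integral-transform of the log-likelihood ratio under $P$, under which $\phi(T)$ is uniform on $[0,1]$ --- and then apply the Donsker--Varadhan representation
\begin{equation*}
    D_\mathrm{KL}(Q\Vert P) \;\ge\; \lambda\,\mathbb{E}_Q[\phi(T)] - \log\mathbb{E}_P\bigl[e^{\lambda\phi(T)}\bigr]
\end{equation*}
with a carefully tuned dual parameter $\lambda>0$. The convexity bound $e^{\lambda\phi}\le 1 + (e^\lambda - 1)\phi$ for $\phi\in[0,1]$ (or, equivalently, the uniform law of $\phi(T)$ under $P$) makes the log moment generating function explicit, and rearranging gives an inequality of the form $2\,\mathrm{AUCROC} - 1 \le D_\mathrm{KL}(Q\Vert P) + c(\lambda)$. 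Taking expectation over $U$ then turns $D_\mathrm{KL}(P_{T|U,\tilde z}\Vert P_{T|\tilde z})$ into $I(T;U|\tilde z)$.

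The hard part is making $c(\lambda)$ come out to exactly $\log(e/2) = 1 - \log 2$. Natural first attempts --- for instance, combining $\Re_\xi \le 2\,\mathrm{TV}(P,Q)$ (which follows from concavity of the ROC) with Pinsker's inequality --- produce $\sqrt{2 D_\mathrm{KL}}$-type bounds that are not linear in $D_\mathrm{KL}$ and are loose in precisely the regime where the target inequality is tight. What appears necessary is a choice of $\lambda$ dictated by the $[0,1]$-structure of $\phi$ and the concavity of the ROC, together with an integration step that extracts the precise constant $\log(e/2)$; once that optimisation is in place, the remainder of the argument is routine.
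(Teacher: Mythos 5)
Your route is genuinely different from the paper's, and it does go through --- but you stopped at exactly the point that decides whether it goes through, so let me close that and flag one misstatement. The paper argues via ROC-curve calculus: it uses a lemma of Khajavi and Kuh (\cref{eq:lemma1}) expressing the KL divergence between the laws of the LRT statistic under $H_1$ and $H_0$ as $-\int_0^1\log\psi'(x)\,dx$ for the ROC curve $\psi$, bounds that by $D_\mathrm{KL}(P_{T|U,\tilde z}\Vert P_{T|\tilde z})$ through data processing, writes $\Re = 1-2\int_0^1 x\psi'(x)\,dx$ by integration by parts, and then lower-bounds $\int_0^1\bigl(2x\psi'(x)-\log\psi'(x)\bigr)dx$ by $\log 2$ with a constrained variational minimization ($\psi'(x)=1/(2x+\lambda)$). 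Your Donsker--Varadhan plan replaces all of this, and the ``careful tuning'' you were worried about is trivial: take $\lambda=2$, matching the coefficient in $\Re=2\,\mathrm{AUCROC}-1$. With $\phi$ the probability-integral transform of the LRT statistic (so $\phi(T)\sim\mathrm{Unif}[0,1]$ under $P$ and $\mathbb{E}_Q[\phi(T)]=\mathrm{AUCROC}$), DV gives $2\,\mathrm{AUCROC}\le D_\mathrm{KL}(Q\Vert P)+\log\frac{e^2-1}{2}$, hence $\Re\le D_\mathrm{KL}(Q\Vert P)+\log\frac{e^2-1}{2e}=D_\mathrm{KL}(Q\Vert P)+\log\sinh(1)$, and $\log\sinh(1)\approx 0.16\le\log(e/2)\approx 0.31$. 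Combined with \cref{eq:fcmi_2} this yields the theorem, with a strictly better constant; it in fact recovers the tight constant implicit in the paper's own optimization, and (with a randomized PIT to break ties) it handles the finite $\mathcal{T}$ the paper assumes more gracefully than the paper's requirement that $\psi$ be twice differentiable.

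Two cautions. First, your parenthetical ``or, equivalently, the uniform law of $\phi(T)$ under $P$'' is not an equivalence and is the one step that could sink the proof: the convexity bound $e^{\lambda\phi}\le 1+(e^\lambda-1)\phi$ only uses $\mathbb{E}_P[\phi]=1/2$, and at $\lambda=2$ it produces the constant $\log\cosh(1)\approx 0.43>\log(e/2)$, which does not prove the statement; you need the exact uniform law of $\phi(T)$ under $P$ (or some sharper control of the moment generating function). Second, your reduction ``prove the inequality for each $u$ and average over $U$'' tacitly assumes the recognizability in the statement is dominated by the $U$-average of per-$u$ recognizabilities; the paper instead treats $P_{T|U,\tilde Z=\tilde z}$ as a single alternative hypothesis and identifies its KL to $P_{T|\tilde Z=\tilde z}$ with $I(T;U|\tilde z)$ through the uniformity of $U$ in \cref{eq:fcmi_2}. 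Both readings involve the same looseness, but if you average you should justify that the averaged AUC controls the stated quantity, or simply run the DV argument once with $Q=P_{T|U,\tilde Z=\tilde z}$ exactly as the paper sets up its hypotheses, which sidesteps the issue.
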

\begin{proof}
    See Appendix A.
\end{proof}

Inspired by the above insights, we propose the RIB to constrain the model to learn representations having low recognizability, which utilizes a log-likelihood ratio estimator that we name \textit{recognizability critic}. We will detail it in the next subsection.

\subsection{Recognizable Information Bottleneck}
We first define a new objective function that takes the recognizability of representations into consideration.
Let $\tilde{z}$ be a realization of $\tilde{Z}$ which is composed of a training set $S$ of size $n$ and a ghost set $\bar{S}$ of the same size, $U \sim \mathrm{Unif}(\{0,1\}^n)$ be a random variable indicating the set from which the instance of representation comes. Note that in general the ghost set is built from the validation or test (if available) set with dummy labels, and we will experiment later with ghost from different sources.
Now we can write a new IB objective called the Recognizable Information Bottleneck (RIB) as follows:
\begin{equation}
    \label{eq:rib}
    \mathcal{L}_{\mathrm{RIB}} = \mathcal{L}_{\mathrm{emp}}(T,Y) + \beta I(T;U|\tilde{z}),
\end{equation}
where the first term enforces empirical risk minimization, the second term regularizes the recognizability of representations and $\beta$ is the trade-off parameter.

To optimize the second term of \cref{eq:rib}, we first train a log-likelihood ratio estimator $V_{\phi}: \mathcal{T}^2 \rightarrow \mathbb{R}$ that we call the recognizability critic. It takes a pair of representations as input, one obtained from the training set $S$ samples and the other from the ghost set $\bar{S}$. The pair of representations is then randomly concatenated and treated as a single draw from the joint distribution $P_{T,U|\tilde{Z}=\tilde{z}}$, since the probability of occurrence of the possible outcomes is consistent.
Next we train the recognizability critic by maximizing the lower bound on the Jensen-Shannon divergence as suggested in \cite{DBLP:conf/iclr/HjelmFLGBTB19,DBLP:conf/icml/PooleOOAT19} by the following objective:
\begin{multline}
    \label{eq:recog_critic}
    \max_{\phi}
    \mathbb{E}_{\tilde{t} \sim P_{T|U,\tilde{Z}=\tilde{z}}} [-\mathrm{sp}(-V_{\phi}(\tilde{t}))] \\
    - \mathbb{E}_{\tilde{t} \sim P_{T|\tilde{Z}=\tilde{z}}} [\mathrm{sp}(V_{\phi}(\tilde{t}))],
\end{multline}
where $\mathrm{sp}(x)=\log (1+e^x)$ is the softplus function. Once the critic $V$ reaches the maximum, we can obtain the density ratio by
\begin{equation*}
    R(\tilde{t}) = \frac{P_{T|U,\tilde{Z}=\tilde{z}}(\tilde{t})}{P_{T|\tilde{Z}=\tilde{z}}(\tilde{t})} \approx \exp(V_{\phi}(\tilde{t})).
\end{equation*}
Since the lowest recognizability achieves when the two distributions are identical, that is, the density ratio $R^*\equiv 1$, to regularize the representations, we use a surrogate loss motivated by \textit{density-ratio matching} \cite{Sugiyama_Suzuki_Kanamori_2012}, which optimizes the Bregman divergence between the optimal density ratio $R^*$ and the estimated density ratio $R(\tilde{t})$ defined as
\begin{equation}
    \label{eq:bregman}
    D_{\mathrm{BR}_F}(R^{*} \Vert R(\tilde{t})) :=
    F(R^{*}) - F(R(\tilde{t})) - \partial F(R(\tilde{t}))(R^{*} - R(\tilde{t})).
\end{equation}
Here, $F$ is a differentiable and strictly convex function and $\partial F$ is its derivative. We can derive different measures by using different $F$. In the experiments we use $F(R)=R \log R - (1 + R) \log(1 + R)$, at which point the Bregman divergence is reduced to the binary KL divergence. More examples and comparison results are given in Appendix C.

Now the objective function for the encoder $f_\theta$ can be written as
\begin{equation}
    \label{eq:rib_opt}
    \min_{\theta} \frac{1}{n}\sum_{i=1}^{n} \ell(t_i,y_i) + \beta  D_{\mathrm{BR}}(R^{*} \Vert R(\tilde{t}_i)).
\end{equation}
The complete optimization algorithm is described in \cref{alg:algorithm}.
Note that although it is also feasible to directly minimize \cref{eq:recog_critic} with respect to the encoder parameter $\theta$ which leads to a minimax game similar to GAN \cite{DBLP:conf/nips/GoodfellowPMXWOCB14,DBLP:conf/nips/NowozinCT16}, we empirically find that this may cause training instability especially when data is scarce (as detailed in the next section).
We consider this is due to the randomness of $U$ which leads to large variance gradients and hinders the convergence of the model, while we utilize a fixed ratio for optimization thus it is not affected.

When the recognizability critic finishes training, we can obtain the class probability from the estimated density ratio via the sigmoid function, which gives us the following decision rules:
\begin{equation*}
    \hat{\xi}(\tilde{t}) = \frac{1}{1 + \exp(-R(\tilde{t}))}.
\end{equation*}
Then the recognizability of representations can be estimated based on the outputs of $\hat{\xi}$ via \cref{eq:recog_equation}.
We find that the recognizability of representations can be used as an indicator of the generalizability and achieve comparable performance to the \textit{f}-CMI bound, which will be discussed in the next section.

\begin{algorithm}[tb]
    \caption{Optimization of Recognizable Information Bottleneck (RIB)}
    \label{alg:algorithm}
    \textbf{Input}: training set $S=\{(x_i,y_i)\}_{i=1}^{N}$, ghost set $\bar{S}=\{\bar{x}_i\}_{i=1}^{N}$, encoder $f_{\theta}$, initial encoder parameter $\theta_0$, recognizability critic $V_{\phi}$, initial recognizability critic parameter $\phi_0$, batch size $B$, learning rate $\eta$ \\
    % \textbf{Parameter}: Optional list of parameters\\
    \textbf{Output}: trained encoder parameter $\theta$, trained recognizability critic parameter $\phi$
    \begin{algorithmic}[1] %[1] enables line numbers
        \STATE initialize: $\theta \leftarrow \theta_0, \phi \leftarrow \phi_0$
        \WHILE{not converged}
        \STATE sample training mini-batch $\{(x_i,y_i)\}_{i=1}^{B} \sim S$
        \STATE sample ghost mini-batch $\{\bar{x}_i\}_{i=1}^{B} \sim \bar{S}$
        \STATE sample $\{u_i\}_{i=1}^{B} \sim \mathrm{Unif}(\{0,1\})$
        \STATE compute $\{t_i=f_{\theta}(x_i)\}_{i=1}^{B}, \{\bar{t}_i=f_{\theta}(\bar{x}_i)\}_{i=1}^{B}$
        \STATE set $\tilde{t}=\{\tilde{t}_i\}_{i=1}^{B}$, where \\
        $
            \tilde{t}_i=
            \begin{cases}
                t_i \circ \bar{t}_i, & u_i=0 \\
                \bar{t}_i \circ t_i, & u_i=1
            \end{cases}
        $
        \STATE compute gradient $g_{\phi}$ by \cref{eq:recog_critic} and update $\phi$:\\
        $\phi \leftarrow \phi + \eta g_{\phi}$
        \STATE compute gradient $g_{\theta}$ by \cref{eq:rib_opt} and update $\theta$:\\
        $\theta \leftarrow \theta - \eta g_{\theta}$
        \ENDWHILE
        \STATE \textbf{return} ($\theta, \phi$)
    \end{algorithmic}
\end{algorithm}

\begin{table*}[t]
    \centering
    \begin{tabular}{lcccccccccccccccc}
        \toprule
        \multirow{2}{*}{Method} & \multicolumn{3}{c}{Fashion}              & \multicolumn{3}{c}{SVHN}                 & \multicolumn{3}{c}{CIFAR10}                                                                                                                                                                                                                                                                                 \\
        \cmidrule(lll){2-4} \cmidrule(lll){5-7} \cmidrule(lll){8-10}
        {}                      & 1250                                     & 5000                                     & 20000                                   & 1250                                     & 5000                                     & 20000                                    & 1250                                     & 5000                                     & 20000                                      \\
        \midrule
        CE                      & 18.61{\scriptsize \textpm 0.87}          & 14.57{\scriptsize \textpm 0.47}          & 10.18{\scriptsize \textpm 0.34}         & 30.03{\scriptsize \textpm 1.04}          & 19.51{\scriptsize \textpm 0.52}          & 11.77{\scriptsize \textpm 0.33}          & 61.51{\scriptsize \textpm 1.13}          & 50.22{\scriptsize \textpm 0.81}          & 32.81{\scriptsize \textpm 0.70}            \\
        L2                      & 18.54{\scriptsize \textpm 0.66}          & 14.81{\scriptsize \textpm 0.39}          & 10.57{\scriptsize \textpm 0.50}         & 29.60{\scriptsize \textpm 1.00}          & 19.30{\scriptsize \textpm 0.73}          & 11.19{\scriptsize \textpm 0.22}          & 61.62{\scriptsize \textpm 0.97}          & 49.89{\scriptsize \textpm 0.93}          & 32.41{\scriptsize \textpm 1.17}            \\
        Dropout                 & 18.58{\scriptsize \textpm 0.57}          & 14.58{\scriptsize \textpm 0.51}          & 10.20{\scriptsize \textpm 0.40}         & 29.68{\scriptsize \textpm 0.94}          & 19.40{\scriptsize \textpm 0.57}          & 11.82{\scriptsize \textpm 0.42}          & 61.53{\scriptsize \textpm 0.95}          & 50.02{\scriptsize \textpm 0.78}          & 32.69{\scriptsize \textpm 0.47}            \\
        VIB                     & 17.57{\scriptsize \textpm 0.44}          & 13.86{\scriptsize \textpm 0.35}          & 9.64{\scriptsize \textpm 0.24}          & 40.54{\scriptsize \textpm 5.75}          & 18.93{\scriptsize \textpm 0.61}          & 11.28{\scriptsize \textpm 0.23}          & 59.77{\scriptsize \textpm 0.78}          & 49.14{\scriptsize \textpm 0.60}          & 31.82{\scriptsize \textpm 0.39}            \\
        NIB                     & 18.30{\scriptsize \textpm 0.42}          & 14.34{\scriptsize \textpm 0.29}          & 9.94{\scriptsize \textpm 0.21}          & 29.34{\scriptsize \textpm 0.71}          & 19.44{\scriptsize \textpm 0.38}          & 11.59{\scriptsize \textpm 0.24}          & 61.00{\scriptsize \textpm 0.77}          & 50.01{\scriptsize \textpm 0.57}          & 32.32{\scriptsize \textpm 0.41}            \\
        DIB                     & 18.11{\scriptsize \textpm 0.37}          & 13.99{\scriptsize \textpm 0.34}          & 9.74{\scriptsize \textpm 0.27}          & 29.95{\scriptsize \textpm 0.93}          & 19.37{\scriptsize \textpm 0.62}          & 11.54{\scriptsize \textpm 0.23}          & 61.02{\scriptsize \textpm 0.70}          & 50.09{\scriptsize \textpm 0.66}          & 32.36{\scriptsize \textpm 0.49}            \\
        PIB                     & 18.44{\scriptsize \textpm 0.52}          & 14.33{\scriptsize \textpm 0.29}          & 10.00{\scriptsize \textpm 0.23}         & 29.40{\scriptsize \textpm 0.78}          & 19.40{\scriptsize \textpm 0.61}          & 11.60{\scriptsize \textpm 0.27}          & 61.19{\scriptsize \textpm 0.82}          & 50.09{\scriptsize \textpm 0.79}          & 32.48{\scriptsize \textpm 0.31}            \\
        RIB                     & \textbf{17.05{\scriptsize \textpm 0.41}} & \textbf{13.57{\scriptsize \textpm 0.28}} & \textbf{9.32{\scriptsize \textpm 0.25}} & \textbf{27.72{\scriptsize \textpm 1.17}} & \textbf{17.37{\scriptsize \textpm 0.52}} & \textbf{10.80{\scriptsize \textpm 0.27}} & \textbf{59.16{\scriptsize \textpm 1.17}} & \textbf{47.57{\scriptsize \textpm 0.60}} & \textbf{31.06{\scriptsize \textpm 0.38}}   \\
        \bottomrule
    \end{tabular}
    \caption{Comparison of the mean test error (\%) on three datasets with different training set sizes.}
    \label{table:perf}
\end{table*}

\section{Experiments}
In this section, we demonstrate the effectiveness of RIB as a training objective to regularize the DNNs and verify the capability of recognizability critic to estimate generalization gap. Code is available at \url{https://github.com/lvyilin/RecogIB}.
\subsection{Experimental Setup}
\phantomsection
\subsubsection{Datasets}
Our experiments mainly conduct on three widely-used datasets: Fashion-MNIST \cite{DBLP:journals/corr/abs-1708-07747}, SVHN \cite{netzer2011reading} and CIFAR10 \cite{krizhevsky2009learning}. We also give the results on MNIST and STL10 \cite{DBLP:journals/jmlr/CoatesNL11} in Appendix C.
We adopt the data setting of \cite{DBLP:conf/nips/HarutyunyanRSG21} to enable the calculation and comparison  of \textit{f}-CMI bound.
Concretely, each experiment is conducted on $k_1$ draws of the original training set (to form a supersample $\tilde{z}$) and $k_2$ draws of the train-val-split (i.e., $k_1 k_2$ runs in total), which correspond to the randomness of $\tilde{Z}$ and $T$, respectively. In all our experiments, $k_1=k_2=5$.
Unless otherwise stated, the validation set is used as the ghost set.
To demonstrate the effectiveness across different data scales, we perform sub-sampling on the training set with different sizes $n\in \{1250, 5000, 20000\}$.
\phantomsection
\subsubsection{Implementation details}
We use a DNN model composed of a 4-layer CNN (128-128-256-1024) and a 2-layer MLP (1024-512) as the encoder, and use a 4-layer MLP (1024-1024-1024-1) as the recognizability critic.
We train the learning model using Adam optimizer \cite{DBLP:journals/corr/KingmaB14} with betas of (0.9, 0.999) and train the recognizability critic using SGD with momentum of 0.9 as it is more stable in practice. All learning rates are set to 0.001, and the models are trained 100 epochs using the cosine annealing learning rate scheme with a batch size of 128. The trade-off parameter $\beta$ is selected from $\{10^{-1},10^0,10^1,10^2\}$ according to the desired regularization strength as discussed later. All the experiments are implemented with PyTorch and performed on eight NVIDIA RTX A4000 GPUs.
More experimental details are provided in Appendix B.

\subsection{Regularization Effects of RIB}
We report the average performance of RIB over $k_1 k_2$ draws to show its regularization effects on different data scales. We use cross entropy (CE) loss as the baseline. Furthermore, we compare with two common regularization methods: L2 normalization and dropout \cite{DBLP:journals/jmlr/SrivastavaHKSS14}. The strength of the L2 normalization is set to 1e-4 and the dropout rate is set to 0.1. We also compare with four popular IB counterparts: VIB \cite{DBLP:conf/iclr/AlemiFD017}, NIB \cite{DBLP:journals/entropy/KolchinskyTW19}, DIB \cite{DBLP:conf/icassp/YuYP21}, and PIB \cite{wang2022pacbayes}.
All methods impose regularization on the representations, except for PIB which is on the weights. Their regularization strength is determined by the best result of $\{10^{-5}, 10^{-4},\dots,10^{1} \}$.
\cref{table:perf} shows that RIB consistently outperforms the compared methods across all datasets and all training set sizes. This demonstrates that regularizing the recognizability of representations as well as reducing the recognizable information contributes significantly to improving the generalization of the model.
Another point worth mentioning is that the performance of PIB does not outperform VIB and its variants.
We consider that this is because the PIB computes the gradient for the overall data only once per epoch, in order to maintain an affordable computational complexity, while representation-based regularization methods compute the gradient for each iteration efficiently, thus provide more fine-grained information for model training.
Furthermore, we credit the improvement of RIB over VIB and its variants to the use of the ghost set, which enables the computation of recognizable information, and the reduction of recognizability of representations to facilitate generalization.

\begin{table}[ht]
    \centering
    \begin{tabular}{l|lll}
        \toprule
        Source   & 1250                                     & 5000                                     & 20000                                    \\
        \midrule
        SVHN     & 61.26{\scriptsize \textpm 1.20}          & 51.59{\scriptsize \textpm 0.97}          & 37.13{\scriptsize \textpm 0.94}          \\
        CIFAR100 & 60.31{\scriptsize \textpm 0.91}          & 49.20{\scriptsize \textpm 0.78}          & 33.27{\scriptsize \textpm 0.61}          \\
        CIFAR10  & \textbf{59.16{\scriptsize \textpm 1.17}} & \textbf{47.57{\scriptsize \textpm 0.60}} & \textbf{31.06{\scriptsize \textpm 0.38}} \\
        \midrule
        Baseline & 61.51{\scriptsize \textpm 1.13}          & 50.22{\scriptsize \textpm 0.81}          & 32.81{\scriptsize \textpm 0.70}          \\
        \bottomrule
    \end{tabular}
    \caption{Comparison of the mean test error (\%) using CIFAR10 as the training data and constructing ghosts with different sources.}
    \label{table:source}
\end{table}

\begin{figure}
    \centering
    \includegraphics[width=\columnwidth]{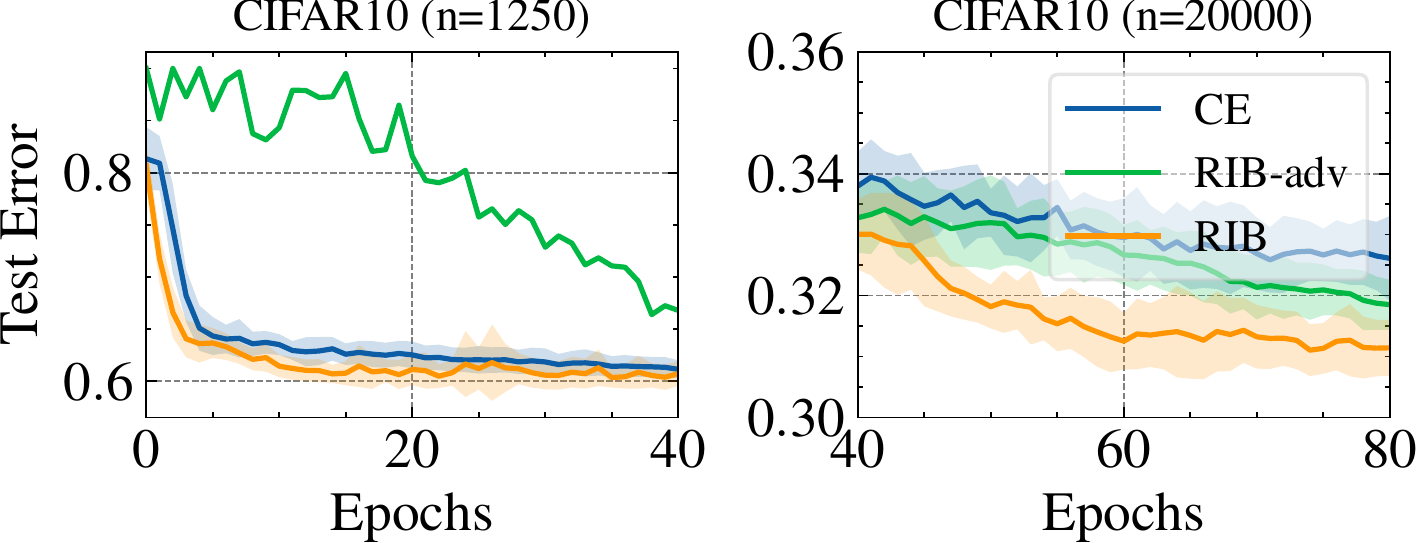}
    \caption{Mean test risk curves on CIFAR10 with training set sizes of 1250 and 20000. ``RIB-adv'' represents optimizing the RIB objective with adversarial training.}
    \label{fig:risk_curve}
\end{figure}
\phantomsection
\subsubsection{Effectiveness of the surrogate loss}
We first examine whether optimizing the Bregman divergence of \cref{eq:bregman} is better than optimizing \cref{eq:recog_critic} by adversarial training method using Jensen-Shannon divergence \cite{DBLP:conf/nips/NowozinCT16}. The risk curves are illustrated in \cref{fig:risk_curve}. We find that: \textit{i}) training RIB in an adversarial manner may produce unstable gradients when the data is scarce thus making the model difficult to converge; \textit{ii}) with more data, ``RIB-adv'' can also regularize the model and perform better than the CE baseline; and \textit{iii}) the RIB optimized by our surrogate loss consistently outperforms the baselines at different data scales, which demonstrates its effectiveness on regularizing the DNNs.

\phantomsection
\subsubsection{Results of ghosts with different sources} Although the construction of the ghost set is supposed to be performed on a supersample drawn from a single data domain, it is intriguing to investigate whether it can still achieve similar performance with different data sources. We use the CIFAR10 as the training set and construct the ghost from three sources: \textit{i}) SVHN, distinct from CIFAR10; \textit{ii}) CIFAR100 \cite{krizhevsky2009learning}, similar to CIFAR10 but with more categories and \textit{iii}) CIFAR10 for comparison. \cref{table:source} shows that \textit{i}) the more similar the sources are to the training set, the more effective the regularization will be. This is reasonable because recognizing samples from other domains may be too simple thus not provide useful information. More similar sources can provide more fine-grained recognizable information that is of more benefit to the model. \textit{ii}) Nevertheless, even data from different sources, such as SVHN, can still provide valuable information to improve generalization when training data is scarce (e.g., $n=1250$).

\begin{figure}
    \centering
    \includegraphics[width=\columnwidth]{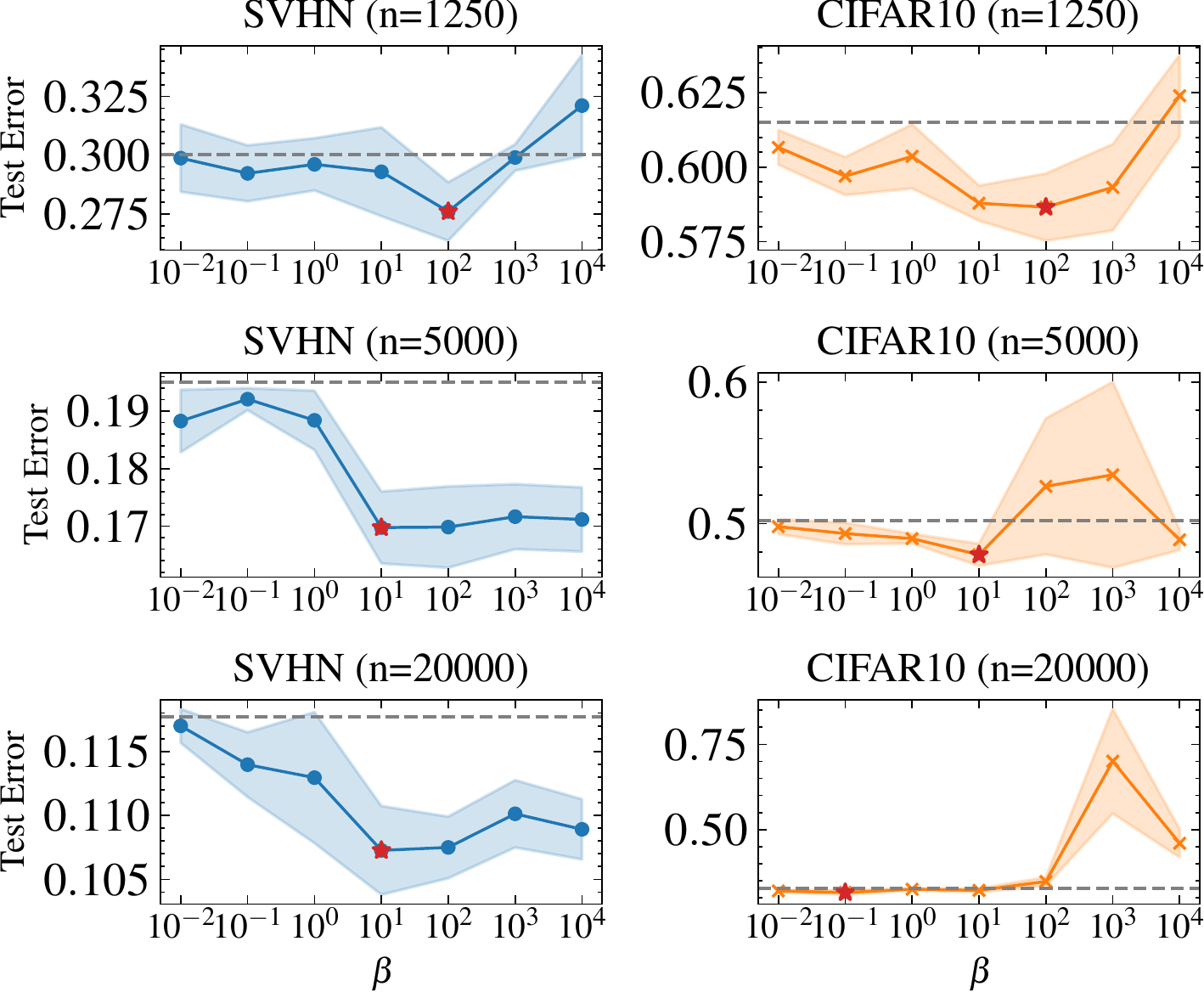}
    \caption{Impact of $\beta$ on mean test risk with different training set sizes. The horizontal dashed line indicates the performance on CE baseline. The star marker indicates the best result.}
    \label{fig:beta}
\end{figure}

\begin{figure*}
    \centering
    \begin{subfigure}[b]{0.3\textwidth}
        \centering
        \includegraphics[width=\textwidth]{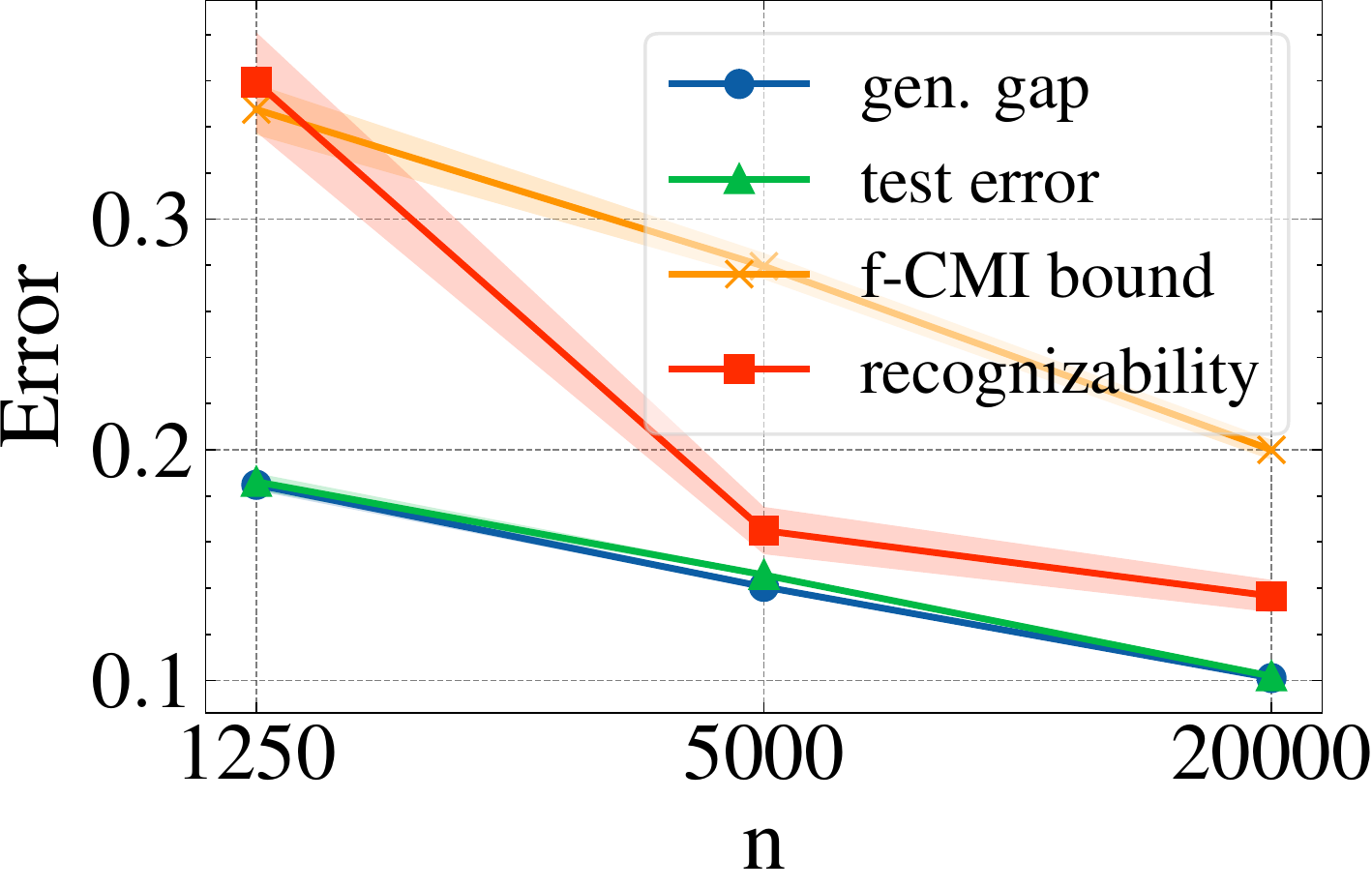}
        \caption{Fashion}
        \label{fig:gen_fashion}
    \end{subfigure}
    \hfill
    \begin{subfigure}[b]{0.3\textwidth}
        \centering
        \includegraphics[width=\textwidth]{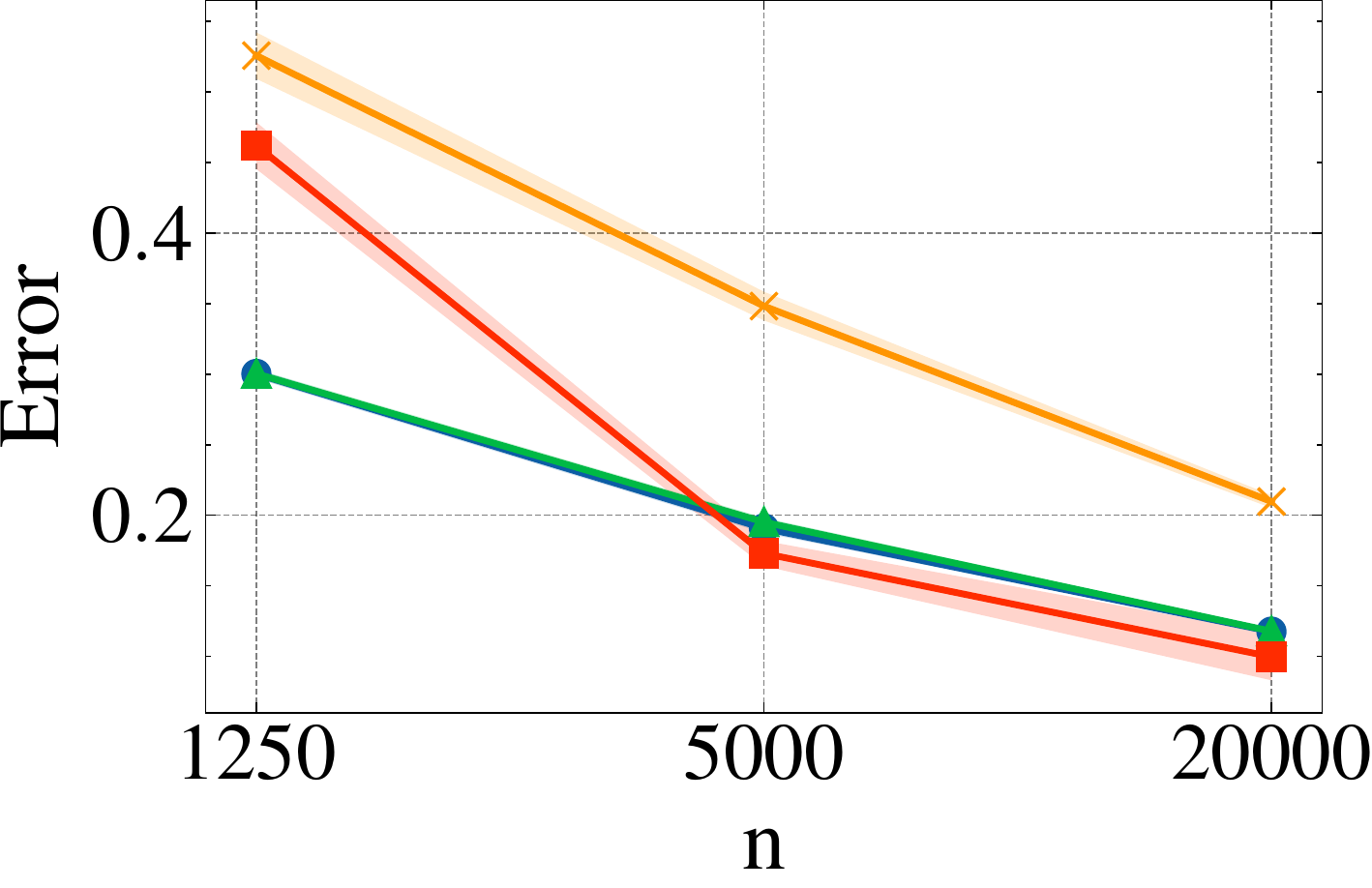}
        \caption{SVHN}
        \label{fig:gen_svhn}
    \end{subfigure}
    \hfill
    \begin{subfigure}[b]{0.3\textwidth}
        \centering
        \includegraphics[width=\textwidth]{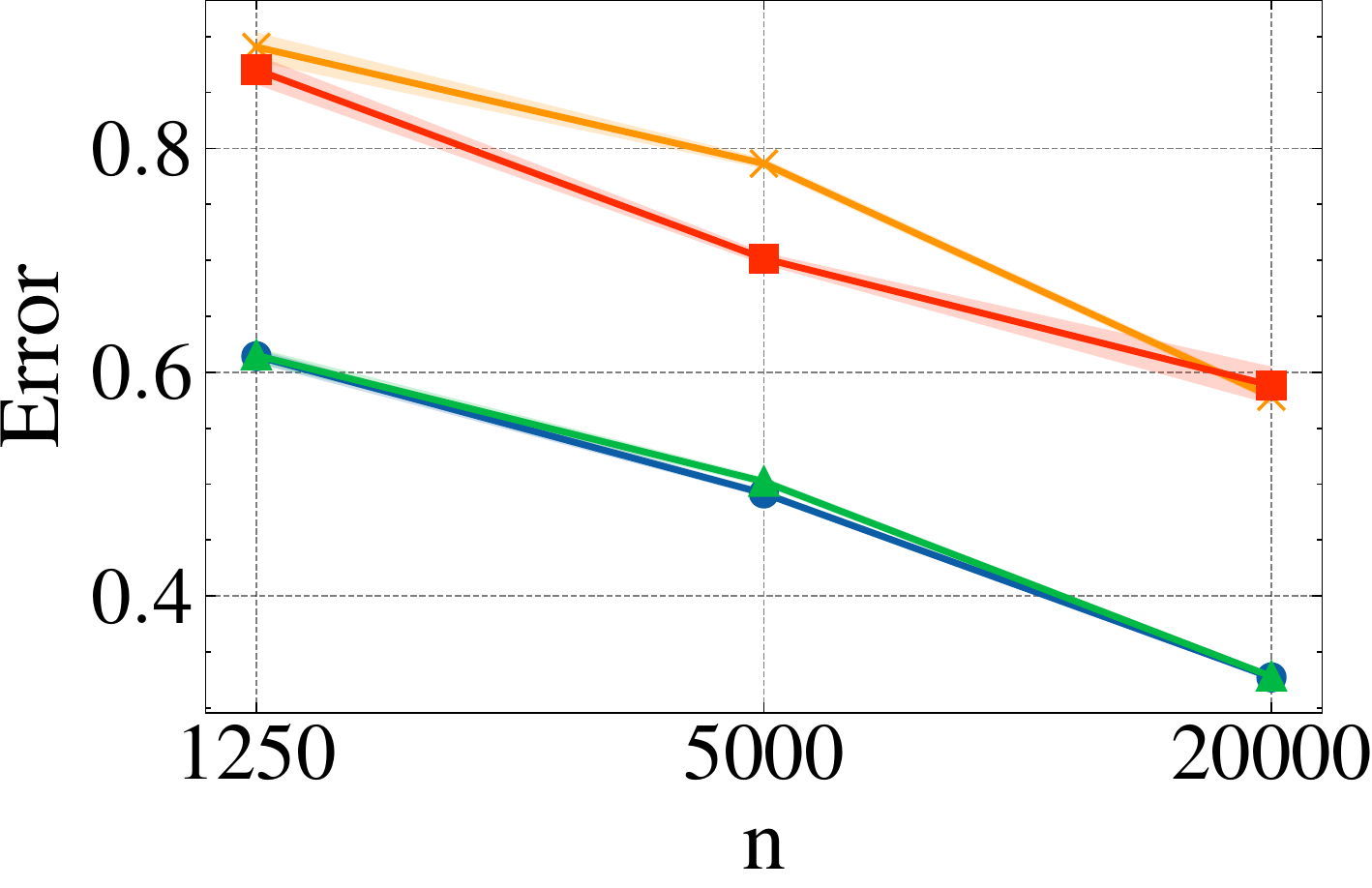}
        \caption{CIFAR10}
        \label{fig:gen_cifar}
    \end{subfigure}
    \caption{Comparison of expected generalization gap, mean test error, \textit{f}-CMI bound and recognizability of representations on three datasets with varying training set sizes.}
    \label{fig:comp_gap}
\end{figure*}

\begin{figure*}[t]
    \centering
    \begin{minipage}[c]{0.67\textwidth}
      \begin{subfloat}[Fashion] {
          \label{fig:recog_fashion}
          \includegraphics[width=0.31\linewidth]{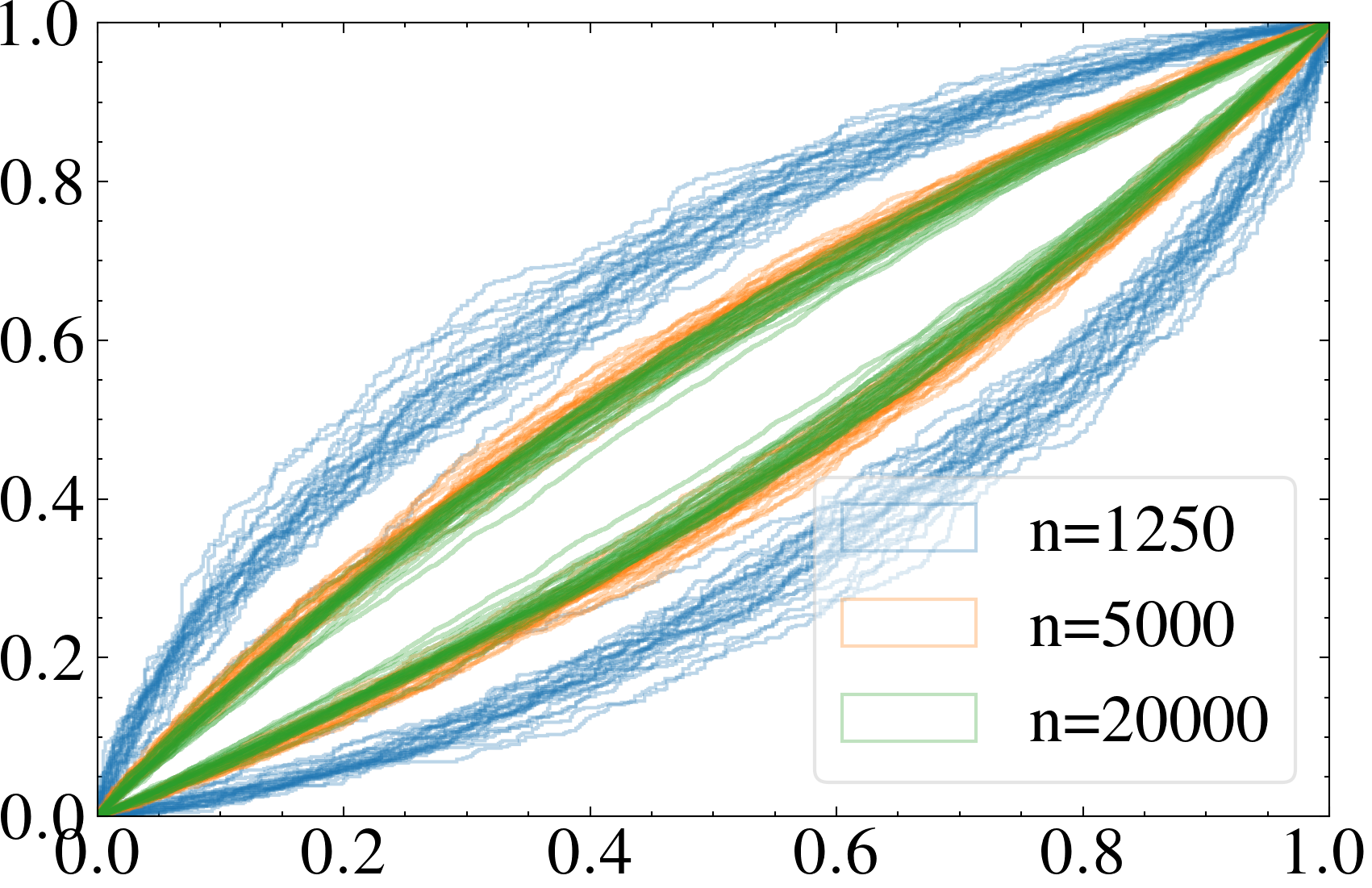}}
      \end{subfloat}
      \hfill
      \begin{subfloat}[SVHN] {
          \label{fig:recog_svhn}
          \includegraphics[width=0.31\linewidth]{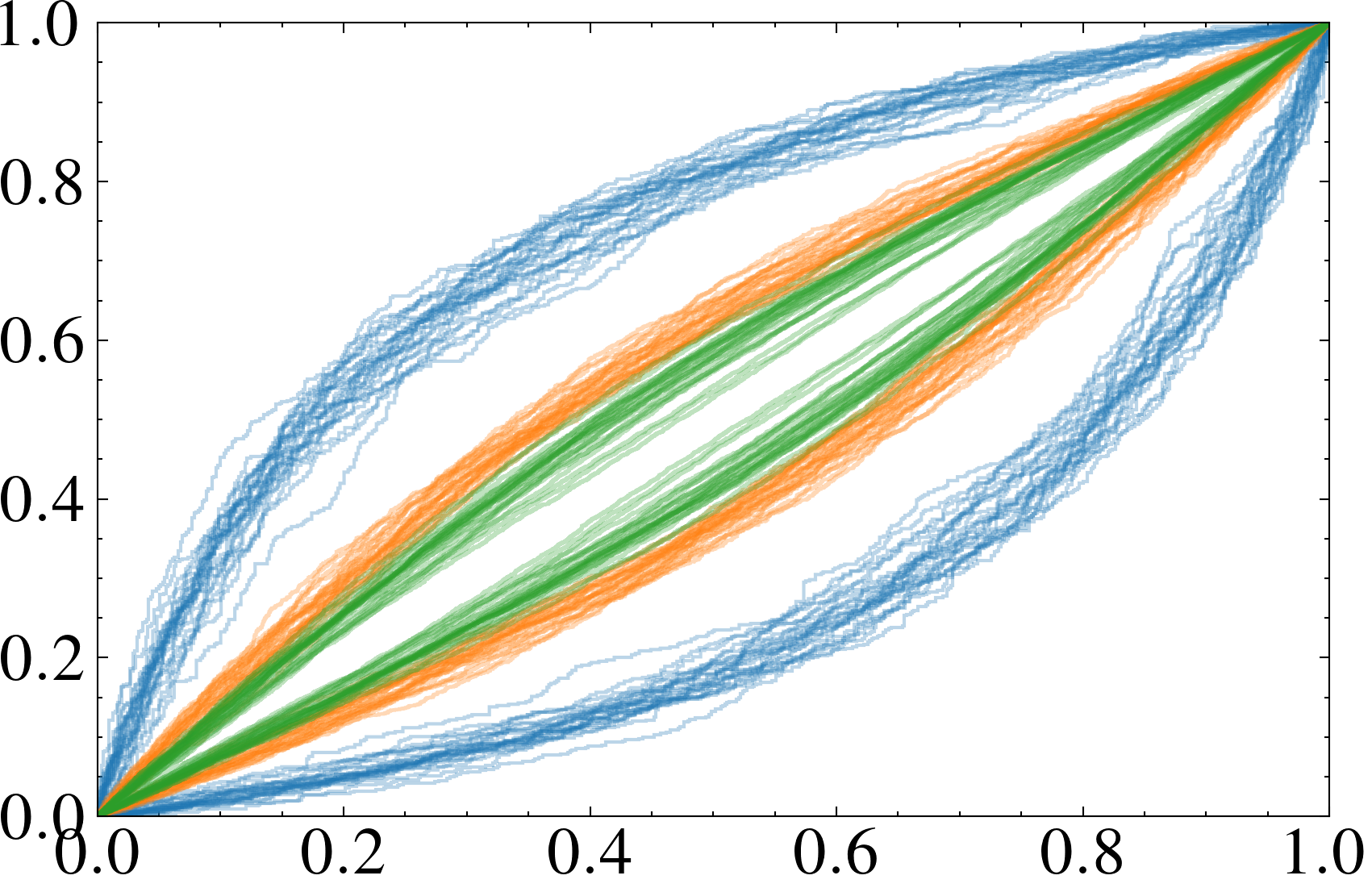}}
      \end{subfloat}
      \hfill
      \begin{subfloat}[CIFAR10] {
          \label{fig:recog_cifar}
          \includegraphics[width=0.31\linewidth]{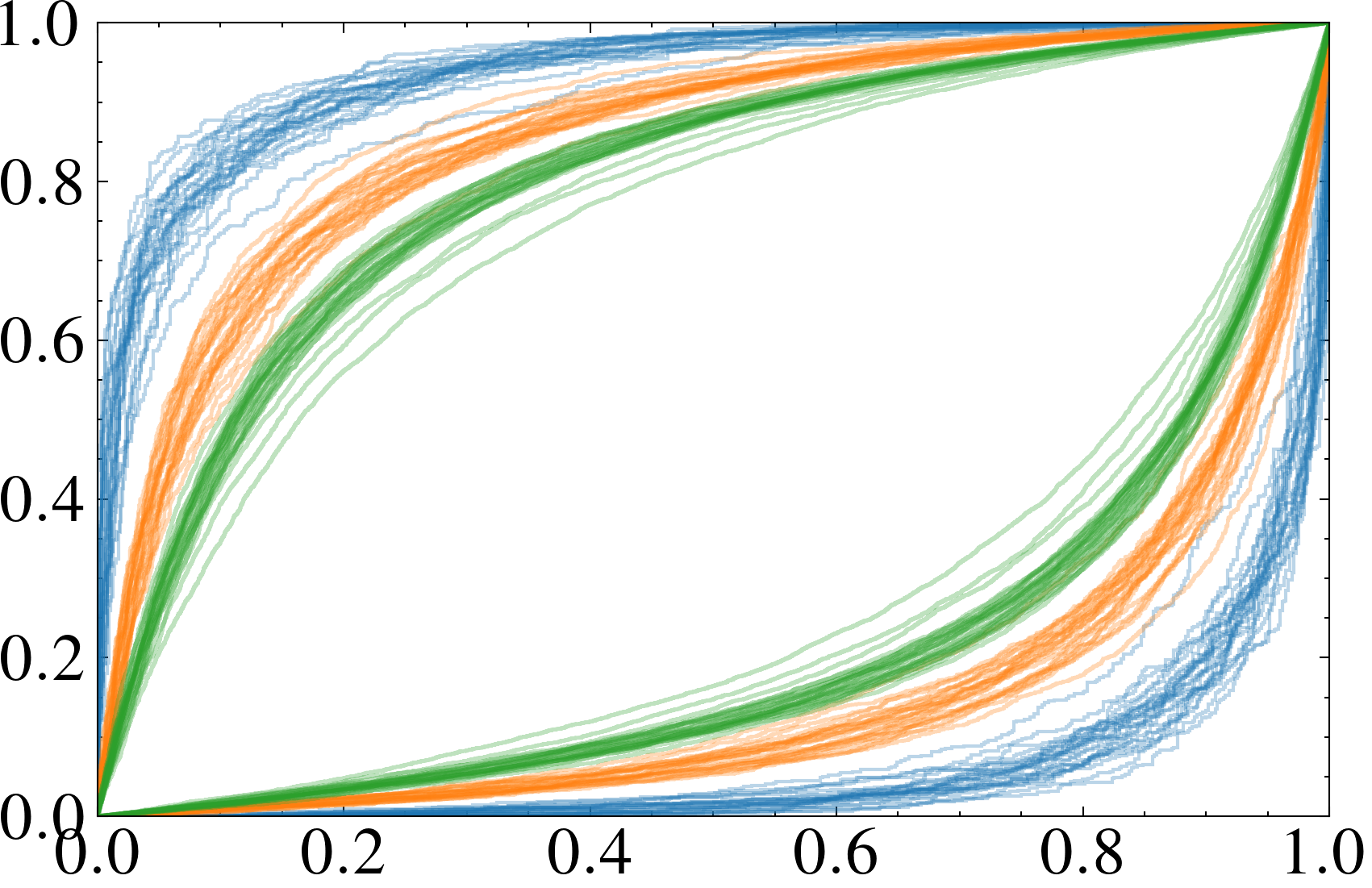}}
      \end{subfloat}
      \caption{Visualization of the achievable region corresponding to the recognizability of representations on three datasets with varying training set sizes. Each polygon represents one run of the experiment. Smaller polygons indicate less recognizability. \label{fig:recog_vis}}
    \end{minipage}
    \hfill
    \begin{minipage}[c]{0.28\textwidth}
      \centering
      \includegraphics[width=\linewidth]{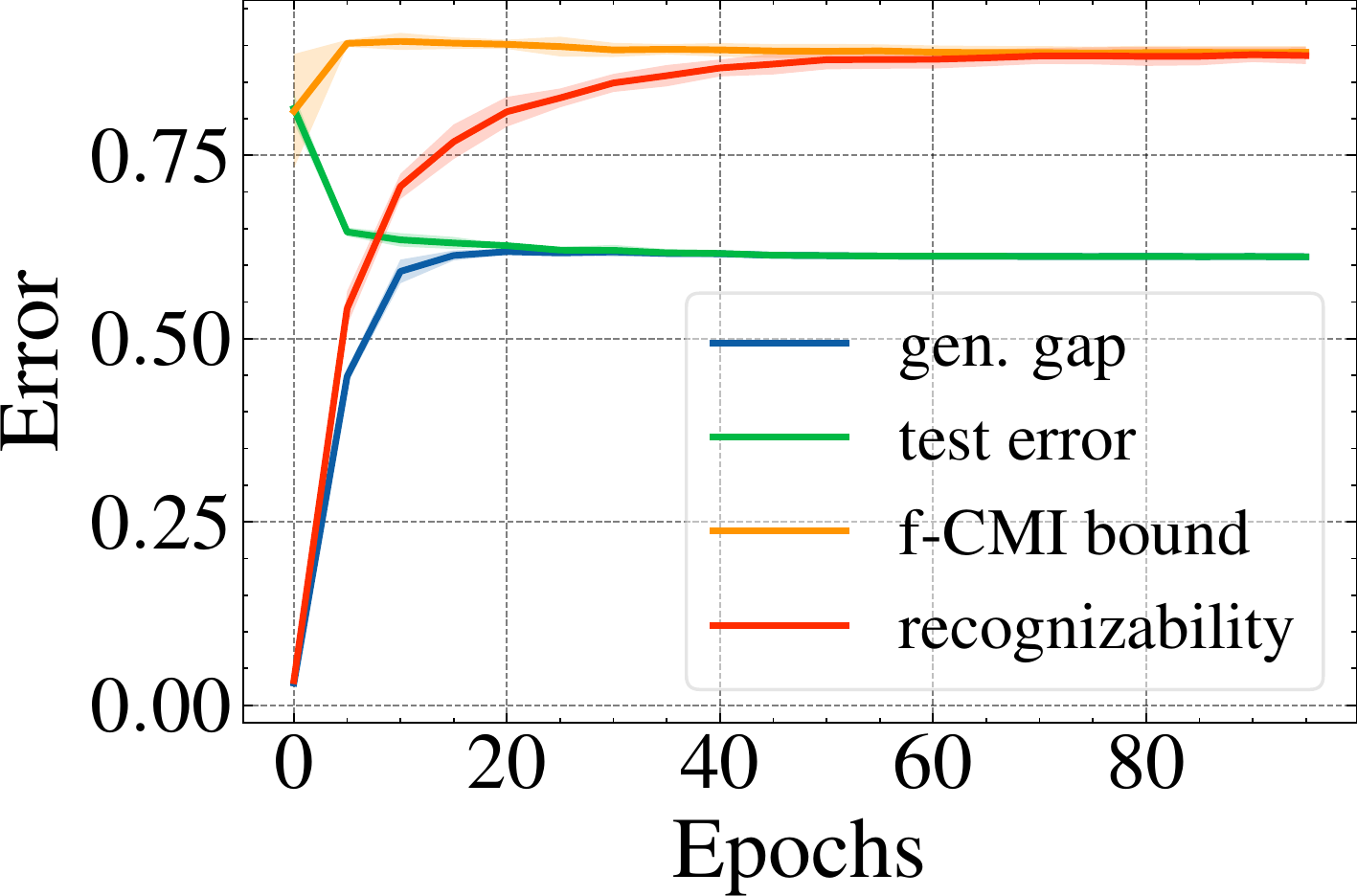}
      \caption{Dynamics of recognizability on CIFAR10 with size $n=1250$.}
      \label{fig:cifar10_1250_epoch}
    \end{minipage}
  \end{figure*}

\phantomsection
\subsubsection{Impacts of \texorpdfstring{$\beta$}{beta}}
To investigate the impacts of the regularization term of the RIB objective, we train models using the RIB objective with various $\beta$. The results on SVHN and CIFAR10 are illustrated in \cref{fig:beta}. We can observe that: \textit{i}) When training data is scarce, the model obtains the best performance at larger $\beta$. This is expected since the model tends to over-fit to the data in this case.
% , and it again demonstrates that the RIB objective can regularize the model to obtain more generalized representations;
\textit{ii}) SVHN is more tolerant of large $\beta$ compared to CIFAR10. When increasing the $\beta$ to more than ${10}^3$, the model still outperforms the CE baseline in most cases, while too large $\beta$ may lead to instability in CIFAR10 training. We consider this because the features of SVHN are easier to learn than those of CIFAR10 and are more likely to be over-fitted to irrelevant features, thus regularization brings greater improvement. This provides us with some indications for choosing the best $\beta$ based on the characteristics of the dataset.

\subsection{Estimation of Generalization Gap}
To verify the recognizability of representations can be used to indicate the generalization performance, we conduct experiments on different training set sizes $n\in \{1250,5000,20000\}$ with the smaller dataset being a subset of the larger dataset. We first train the models on different dataset sizes respectively, and then train the recognizability critics using the representations obtained from the trained models on the smallest data subset respectively. This allows us to evaluate the generalization to the unseen samples outside the smallest subset.
Each experiment is repeated $k_1 k_2$ times to evaluate the \textit{f}-CMI bound. As illustrated in \cref{fig:comp_gap}, we find that \textit{i}) both the recognizability of representations and the \textit{f}-CMI bound reflect the generalization to the unseen data, and the recognizability of representations even obtains better estimates in some cases; \textit{ii}) since the recognizability of representations is a parameterized approximation rather than an upper bound, the risk may sometimes be underestimated as in \cref{fig:gen_svhn}.
We also visualize the achievable region corresponding to the recognizability of representations, which is obtained by plotting the ROC curve of the score function and its central symmetry about the point (1/2, 1/2). The symmetry of the achievable region is due to the fact that we can construct an opposite test when any test chooses $H_0$ it chooses $H_1$ and vice versa.
The visualization results are illustrated in \cref{fig:recog_vis}.
We find that the better the generalization performance, the smaller the area of achievable regions, which validates the plausibility of our formulation. Since CIFAR10 features are more difficult to learn thus easier to over-fit, it can be seen that a larger area is covered than the other two datasets.

\phantomsection
\subsubsection{Recognizability Dynamics}
To investigate how the recognizability changes during training, we evaluate the recognizability every 5 epochs and plot the dynamics of recognizability as the number of training epochs changes in \cref{fig:cifar10_1250_epoch}.
We observe that recognizability can track the generalization gap as expected and eventually converge to around the \textit{f}-CMI bound. See Appendix C for more results.

\section{Conclusions and Future Work}
In this paper, we establish the connection between the recognizability of representations and the \textit{f}-CMI bound, and propose the RIB to regularize the recognizability of representations through Bregman divergence. 
We conducted extensive experiments on several commonly used datasets to demonstrate the effectiveness of RIB in regularizing the model and estimating the generalization gap. 
% The proposed RIB has the potential to be applied to a wider range of tasks. For instance, we found that RIB can improve the robustness of the model to label noise, especially in small data scale and relatively low noise scenarios, although it is not specifically designed for this case (see Appendix C). 
Future work can investigate whether RIB is also effective on other tasks such as few-shot learning, out-of-distribution detection, domain generalization, etc.

\appendix

\section{Proof of Theorem 1}
\label{appx:proof1}
% \begin{theorem}
%     Let $\tilde{z}$ be a realization of supersample $\tilde{Z}$, $T$ be the representation variable defined as above and $U \sim \mathrm{Unif}(\{0,1\}^n)$. The recognizability of representations $\Re(P_{T|\tilde{Z}=\tilde{z}},P_{T|U,\tilde{Z}=\tilde{z}})$ is not larger than $1.443 \cdot I(T;U|\tilde{z})$.
% \end{theorem}

We start with giving a formal definition of the ROC curve for log-likelihood ratio test (LRT).
We consider two hypotheses, $H_0$ and $H_1$, corresponding to the detection of two distributions $P_{T|\tilde{Z}=\tilde{z}}$ and $P_{T|U,\tilde{Z}=\tilde{z}}$, respectively,
$$ H_0: T \sim P_{T|\tilde{Z}=\tilde{z}} \qquad H_1: T \sim P_{T|U,\tilde{Z}=\tilde{z}}\,. $$
Denote $\xi(T)$ as the LRT statistic random variable w.r.t. $\xi$, while LRT is obtained by
$$ \xi(t) = \log \frac{P_T(t|H_1)}{P_T(t|H_0)} = \log \frac{P_{T|U,\tilde{Z}=\tilde{z}}(t)}{P_{T|\tilde{Z}=\tilde{z}}(t)},
$$
where $P_T(t|H_0)$ and $P_T(t|H_1)$ denote the distributions of $T$ under the hypothesis $H_0$ $H_1$, respectively.
We also define $P_0(\tau)=\mathrm{Pr}(\xi(T) \ge \tau | H_0)$ and $P_1(\tau)=\mathrm{Pr}(\xi(T) \ge \tau | H_1)$, where $\tau$ is a threshold. Then, each point on the ROC curve can be achieved by $(P_0(\tau), P_1(\tau))$.
Let $x=P_0(\tau)$, $P_{0}^{-1}$ be the inverse function of $P_{0}$, the ROC curve can be expressed as $\psi_\xi(x)=P_1(P_{0}^{-1}(x))$. For the sake of brevity, we omit the subscript $\xi$. Note that due to the properties of ROC, $\psi(x)$ is twice continuously differentiable on the interval $[0,1]$ and should satisfy the following conditions \cite{Levy_2008,Khajavi_Kuh_2018}:
\begin{align}
    \int_{0}^{1} \psi^\prime(x)\,dx = 1   \label{eq:cond1}\tag{C1} \\
    \psi^\prime(x) \ge 0  \label{eq:cond2}\tag{C2}                 \\
    \psi^{\prime\prime}(x) \le 0  \label{eq:cond3}\tag{C3}
\end{align}
We can express the area under the ROC curve by
\begin{equation*}
    \mathrm{AUCROC}_\xi = \int_{0}^{1} \psi(x) \, dx \,.
\end{equation*}
Our proof relies on the following lemma.
\begin{lemma}[Lem. 1 of \cite{Khajavi_Kuh_2018}]
    Let $\xi_0 := \xi(T)|H_0$, $\xi_1 := \xi(T)|H_1$, $P_{\xi_0}(t)$ and $P_{\xi_1}(t)$ be the distribution of the random variables $\xi_0$ and $\xi_1$, respectively. Given the ROC curve, $\psi(x)$, we can compute the Kullback-Leibler (KL) divergence
    \begin{equation}
        \label{eq:lemma1}
        D_\mathrm{KL}(P_{\xi_1}(t) \Vert P_{\xi_0}(t)) = -\int_{0}^{1} \log \psi^\prime(x) \, dx \,.
    \end{equation}
\end{lemma}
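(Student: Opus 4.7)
The plan is to translate both sides of the inequality into integrals of $\psi$ and $\psi'$ on $[0,1]$, and then reduce the whole statement to a single pointwise estimate that produces the constant $\log\frac{e}{2}$ exactly.

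First, I would rewrite the two sides in terms of $\psi$. By \cref{eq:recog_equation}, $\Re_\xi(P_{T|\tilde{Z}=\tilde{z}},P_{T|U,\tilde{Z}=\tilde{z}}) = 2\int_0^1 \psi(x)\,dx - 1$. On the other side, since $\xi(T) = \log\tfrac{dP_{T|U,\tilde{Z}=\tilde{z}}}{dP_{T|\tilde{Z}=\tilde{z}}}(T)$ is a sufficient statistic for the LRT problem, the data-processing inequality is tight and gives $D_{\mathrm{KL}}(P_{T|U,\tilde{Z}=\tilde{z}}\Vert P_{T|\tilde{Z}=\tilde{z}}) = D_{\mathrm{KL}}(P_{\xi_1}\Vert P_{\xi_0})$. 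Combining this with \cref{eq:fcmi_2} and Lemma~1 then yields $I(T;U|\tilde{z}) = -\int_0^1 \log\psi'(x)\,dx$.

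Second, I would apply integration by parts to $\int_0^1 \psi(x)\,dx$, using the boundary values $\psi(0)=0$ and $\psi(1)=1$ inherited from \ref{eq:cond1} and the normalisation of the ROC, to obtain $\int_0^1 \psi(x)\,dx = 1 - \int_0^1 x\,\psi'(x)\,dx$. Substituting this and the KL identity from the previous step into the claim, the target inequality
\begin{equation*}
2\int_0^1 \psi(x)\,dx - 1 \;\le\; -\int_0^1 \log\psi'(x)\,dx + \log\tfrac{e}{2}
\end{equation*}
simplifies, after rearrangement and use of $\log\tfrac{e}{2} = 1 - \log 2$, to the clean reduced statement
\begin{equation*}
\int_0^1 \bigl[2x\,\psi'(x) - \log \psi'(x)\bigr]\,dx \;\ge\; \log 2.
\end{equation*}

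Third, I would prove this reduced inequality by a pointwise lower bound on the integrand. For each fixed $x \in (0,1]$, elementary calculus shows that $g \mapsto 2xg - \log g$ attains its minimum on $g>0$ at $g^\ast(x) = \tfrac{1}{2x}$, with minimum value $1 + \log(2x)$. Hence $2x\,\psi'(x) - \log\psi'(x) \ge 1 + \log(2x)$ for every admissible $\psi'(x)>0$. Integrating and using $\int_0^1 \log x\,dx = -1$, the right-hand side integrates to $1 + \log 2 - 1 = \log 2$, which is exactly what is required. Chaining the three steps back together produces $\Re_\xi \le I(T;U|\tilde{z}) + 1 - \log 2 = I(T;U|\tilde{z}) + \log\tfrac{e}{2}$.

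The main obstacle I expect is the boundary/support behaviour of $\psi'$: the pointwise minimizer $g^\ast(x)=1/(2x)$ is not integrable near $0$, so it is not itself an admissible $\psi'$, and the pointwise estimate is vacuous on $\{\psi'=0\}$. The non-attainment is harmless since we only need the pointwise inequality, not its optimizer. For the zero set I would either pass to the limit $\psi'\to 0^+$, in which case the integrand diverges to $+\infty$ and the inequality is trivial, or restrict the argument to the support of $\psi'$, noting that both the AUC representation and Lemma~1 are insensitive to a Lebesgue-null set. The ROC conditions \ref{eq:cond1}--\ref{eq:cond3} ensure that $\psi'$ is a non-increasing nonnegative density, so these technicalities do not affect the conclusion.
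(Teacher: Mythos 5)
You have proved the wrong statement. The statement at issue is the lemma itself: the identity $D_\mathrm{KL}(P_{\xi_1}\Vert P_{\xi_0}) = -\int_0^1 \log\psi'(x)\,dx$ relating the KL divergence between the laws of the LRT statistic under $H_1$ and $H_0$ to the ROC curve. Your proposal instead proves Theorem~1 of the paper (the bound $\Re_\xi \le I(T;U|\tilde z) + \log\tfrac{e}{2}$), and in its second step it explicitly invokes ``Lemma~1'' as a known ingredient --- i.e., it assumes precisely the identity it was asked to establish. Nothing in your argument engages with the substance of the lemma, which is that the slope of the ROC curve at $x = P_0(\tau)$ equals the likelihood ratio of the test statistic at the corresponding threshold: writing $\psi'(x) = P_1'(\tau)/P_0'(\tau) = p_{\xi_1}(\tau)/p_{\xi_0}(\tau)$ and changing variables via $dx = -p_{\xi_0}(\tau)\,d\tau$ turns $-\int_0^1\log\psi'(x)\,dx$ into a KL divergence between the two distributions of $\xi(T)$. (For what it is worth, the paper does not reprove this either; it imports the result verbatim from Khajavi and Kuh, 2018, so there is no in-paper proof to compare against --- but a blind proof attempt for this statement must supply that change-of-variables argument, and yours does not.)

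A secondary remark: read as a proof of Theorem~1, your argument follows the paper's Appendix~A almost exactly (same integration by parts, same reduction to $\int_0^1\bigl[2x\psi'(x)-\log\psi'(x)\bigr]\,dx \ge \log 2$), but your final step --- the pointwise bound $2xg-\log g \ge 1+\log(2x)$ for all $g>0$, integrated using $\int_0^1\log(2x)\,dx = \log 2 - 1$ --- is cleaner and more rigorous than the paper's Lagrangian treatment, which only exhibits a stationary point of a constrained variational problem. Your handling of the boundary behaviour of $\psi'$ is also sensible. None of this, however, repairs the central problem: the assigned lemma remains unproved, and your argument is circular with respect to it.
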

This lemma establishes the relationship between receiver operating characteristic (ROC) curves and KL divergence. We will use it to connect the recognizability of representations and the KL divergence.

\begin{proof}[Proof of Theorem 1]
    We have that
    \begin{align*}
        D_\mathrm{KL}(P_{\xi_1}(t) \Vert P_{\xi_0}(t))
         & \le D_\mathrm{KL}(P_{T}(t|H_1) \Vert P_{T}(t|H_0))                                 \\
         & =D_\mathrm{KL}(P_{T|U,\tilde{Z}=\tilde{z}}(t) \Vert P_{T|\tilde{Z}=\tilde{z}}(t)),
    \end{align*}
    where the inequality holds by the data processing property for divergences \cite{Polyanskiy_Wu} and the equation holds by definition.
    Now we can prove that the theorem holds by proving the following inequality
    \begin{equation}
        \label{ineq:goal1}
        \Re(P_{T|\tilde{Z}=\tilde{z}},P_{T|U,\tilde{Z}=\tilde{z}}) \le D_\mathrm{KL}(P_{\xi_1}(t) \Vert P_{\xi_0}(t)) + \log \frac{e}{2} \,.
    \end{equation}
    By definition, we have that
    \begin{equation}
        \label{eq:recog_def}
        \Re(P_{T|\tilde{Z}=\tilde{z}},P_{T|U,\tilde{Z}=\tilde{z}}) = 2 \int_{0}^{1} \psi(x)\,dx - 1 \,.
    \end{equation}
    Using integration by parts, we have $\int_{0}^{1} \psi(x)\,dx = 1 - \int_{0}^{1} x\psi^\prime(x)\,dx$. Substituting this into \cref{eq:recog_def} gives
    \begin{equation}
        \label{eq:recog_def2}
        \Re(P_{T|\tilde{Z}=\tilde{z}},P_{T|U,\tilde{Z}=\tilde{z}}) = 1 - 2 \int_{0}^{1} x\psi^\prime(x)\,dx \,.
    \end{equation}
    Substituting \cref{eq:lemma1,eq:recog_def2} into Inequality (\ref{ineq:goal1}) and rearranging terms yields the following new goal:
    \begin{equation}
        \label{ineq:goal2}
        \int_{0}^{1} 2x\psi^\prime(x) - \log \psi^\prime(x) \, dx \ge \log 2 \,.
    \end{equation}
    Next we will prove that the minimum of the LHS of Inequality (\ref{ineq:goal2}) is greater than $\log 2$. We convert the proof into an optimization problem as follows
    \begin{equation}
        \label{eq:optimization}
        \begin{gathered}
            \arg \min_{\psi^\prime(x)}
            \int_{0}^{1} 2x\psi^\prime(x) - \log \psi^\prime(x) \, dx \\
            \text{s.t. (\ref{eq:cond1})-(\ref{eq:cond3})} \,.
        \end{gathered}
    \end{equation}
    We then express \cref{eq:optimization} with a Lagrangian multiplier $\lambda > 0$ and yields the following Lagrangian
    \begin{equation*}
        \label{eq:lagrangian}
        L(\psi^\prime(x), \lambda) = \int_{0}^{1} 2x\psi^\prime(x) - \log \psi^\prime(x) \, dx + \lambda (\int_{0}^{1} \psi^\prime(x)\,dx - 1) \,.
    \end{equation*}
    Since the Lagrangian $L(\psi^\prime(x), \lambda)$ is a convex functional of $\psi^\prime(x)$, we can now calculate its derivative w.r.t. $\psi^\prime(x)$ by
    \begin{equation*}
        \label{eq:derivative}
        \frac{\partial L(\psi^\prime(x), \lambda)}{\partial \psi^\prime(x)} =
        \int_{0}^{1} (2x - \frac{1}{\psi^\prime(x)} + \lambda) \,dx \,.
    \end{equation*}
    Setting $\frac{\partial L(\psi^\prime(x), \lambda)}{\partial \psi^\prime(x)} = 0$, we obtain
    \begin{equation}
        \label{eq:hprimez}
        \psi^\prime(x) = \frac{1}{2x+\lambda} \,.
    \end{equation}
    Substituting \cref{eq:hprimez} into Inequality (\ref{ineq:goal2}) and calculating the definite integral gives
    \begin{equation*}
        \label{ineq:goal_left}
        \int_{0}^{1} \frac{2x}{2x+\lambda} + \log (2x+\lambda) \, dx = \log (2+\lambda) \ge \log 2,
    \end{equation*}
    which concludes the proof of the theorem.
\end{proof}

\section{Additional Experimental Details}
\label{appx:details}
\subsection{Network Architectures}
In this work, we employ two deep models: a neural network encoder to learn the representations and a recognizability critic to learn the recognizability of representations.
%  For the encoder, we adopt the architecture similar to \cite{DBLP:conf/nips/HarutyunyanRSG21}. 
The specific network architectures are presented in \cref{table:arch_enc,table:arch_recog}, respectively.

\begin{table}[ht]
    \centering
    \begin{tabular}{l|p{0.75\columnwidth}}
        \toprule
        Type & Parameters                                                                                  \\
        \midrule
        Conv & 128 filters, $4 \times 4$ kernels, stride 2, padding 1, \newline batch normalization, ReLU  \\
        Conv & 128 filters, $4 \times 4$ kernels, stride 2, padding 1, \newline batch normalization, ReLU  \\
        Conv & 256 filters, $3 \times 3$ kernels, stride 2, padding 0, \newline batch normalization, ReLU  \\
        Conv & 1024 filters, $3 \times 3$ kernels, stride 1, padding 0, \newline batch normalization, ReLU \\
        FC   & 1024 units, ReLU                                                                            \\
        FC   & 512 units, ReLU                                                                             \\
        \bottomrule
    \end{tabular}
    \caption{The network architecture of the neural network encoder.}
    \label{table:arch_enc}
\end{table}

\begin{table}[ht]
    \centering
    \begin{tabular}{l|l}
        \toprule
        Type & Parameters                 \\
        \midrule
        FC   & 1024 units, LeakyReLU      \\
        FC   & 1024 units, LeakyReLU      \\
        FC   & 1024 units, LeakyReLU      \\
        FC   & 1 unit,  linear activation \\
        \bottomrule
    \end{tabular}
    \caption{The network architecture of the recognizability critic.}
    \label{table:arch_recog}
\end{table}

\subsection{Training Set Sizes}
We set three data scales analogous to \cite{DBLP:conf/nips/HarutyunyanRSG21} but with some minor changes. For MNIST, Fashion-MNIST, SVHN and CIFAR10, we set the training set sizes to $\{1250,5000,20000\}$, the smaller one is a quarter of the larger one; for STL10, we set the training set sizes to $\{625,1250,2500\}$, since it contains only 5000 training samples and we should keep at least half of them to form a ghost set.

\section{Additional Experimental Results}
\label{appx:results}
\subsection{Density-ratio Matching Using Different Measures}
The Bregman divergence is a generalized measure of the difference between two points. Applying different $F(R)$ will degrade the Bregman divergence to correspondingly different measures. Several choices of $F(R)$ commonly used in density-ratio matching \cite{Sugiyama_Suzuki_Kanamori_2012} are given in \cref{table:density_ratio}. \cref{table:bregman} shows that they do not exhibit significant performance differences.
\begin{table*}[ht]
    \centering
    \begin{tabular}{lll}
        \toprule
        Name                             & $F(R)$                            & $D_{\mathrm{BR}}(R^{*} \Vert R)$                         \\
        \midrule
        Binary KL (BKL) divergence       & $ R \log R - (1 + R) \log(1 + R)$ & $ (1+R) \log \frac{1+R}{1+R^*} + R^* \log \frac{R^*}{R}$ \\
        Squared (SQ) distance            & $(R - 1)^2 / 2$                   & $ \frac{1}{2}(R^*-R)^2$                                  \\
        Unnormalized KL (UKL) divergence & $R \log R - R$                    & $R^* \log \frac{R^*}{R} - R^* + R$                       \\
        \bottomrule
    \end{tabular}
    \caption{Summary of density ratio matching using different measures.}
    \label{table:density_ratio}
\end{table*}

\begin{table*}[ht]
    \centering
    \begin{tabular}{lcccccccccccccccc}
        \toprule
        \multirow{2}{*}{Method} & \multicolumn{3}{c}{Fashion}              & \multicolumn{3}{c}{SVHN}                 & \multicolumn{3}{c}{CIFAR10}                                                                                                                                                                                                                                                                               \\
        \cmidrule(lll){2-4} \cmidrule(lll){5-7} \cmidrule(lll){8-10}
        {}                      & 1250                                     & 5000                                     & 20000                                   & 1250                                     & 5000                                     & 20000                                    & 1250                                     & 5000                                     & 20000                                    \\
        \midrule
        RIB                     & \textbf{17.05{\scriptsize \textpm 0.41}} & 13.57{\scriptsize \textpm 0.28}          & \textbf{9.32{\scriptsize \textpm 0.25}} & \textbf{27.72{\scriptsize \textpm 1.17}} & 17.37{\scriptsize \textpm 0.52}          & 10.80{\scriptsize \textpm 0.27}          & \textbf{59.16{\scriptsize \textpm 1.17}} & 47.57{\scriptsize \textpm 0.60}          & \textbf{31.06{\scriptsize \textpm 0.38}} \\
        RIB\textsubscript{SQ}   & 17.06{\scriptsize \textpm 0.51}          & 13.63{\scriptsize \textpm 0.34}          & 9.46{\scriptsize \textpm 0.24}          & 28.26{\scriptsize \textpm 1.05}          & \textbf{17.19{\scriptsize \textpm 0.44}} & \textbf{10.74{\scriptsize \textpm 0.23}} & 59.46{\scriptsize \textpm 1.03}          & 47.20{\scriptsize \textpm 1.07}          & 31.33{\scriptsize \textpm 0.41}          \\
        RIB\textsubscript{UKL}  & 17.06{\scriptsize \textpm 0.39}          & \textbf{13.51{\scriptsize \textpm 0.39}} & 9.40{\scriptsize \textpm 0.27}          & 28.14{\scriptsize \textpm 1.27}          & 17.20{\scriptsize \textpm 0.38}          & 10.80{\scriptsize \textpm 0.32}          & 59.57{\scriptsize \textpm 1.02}          & \textbf{47.15{\scriptsize \textpm 0.60}} & 31.13{\scriptsize \textpm 0.50}          \\
        \bottomrule
    \end{tabular}
    \caption{Comparison of the mean test error (\%) on three datasets with different measure function as Bregman divergence.}
    \label{table:bregman}
\end{table*}

% \subsection{Comparing with other IB variants}

\subsection{Regularization Effects on MNIST and STL10}
We report additional results on MNIST and STL10 that are not included in the main text due to space constraints.
\cref{table:perf2} shows that RIB still outperforms other baselines across all datasets and all training set sizes. This again demonstrates that the effectiveness of the RIB objective in improving the generalization performance of the model.

\begin{table*}[ht]
    \centering
    \begin{tabular}{lcccccccccccccccc}
        \toprule
        \multirow{2}{*}{Method} & \multicolumn{3}{c}{MNIST}               & \multicolumn{3}{c}{STL10}                                                                                                                                                                                          \\
        \cmidrule(lll){2-4} \cmidrule(lll){5-7}
        {}                      & 1250                                    & 5000                                    & 20000                                   & 625                                      & 1250                                     & 2500                                     \\
        \midrule
        CE                      & 4.75{\scriptsize \textpm 1.00}          & 2.43{\scriptsize \textpm 0.29}          & 1.00{\scriptsize \textpm 0.09}          & 66.62{\scriptsize \textpm 1.20}          & 61.88{\scriptsize \textpm 1.01}          & 56.21{\scriptsize \textpm 0.97}          \\
        L2                      & 4.25{\scriptsize \textpm 0.39}          & 2.46{\scriptsize \textpm 0.26}          & 0.95{\scriptsize \textpm 0.19}          & 66.47{\scriptsize \textpm 0.71}          & 62.26{\scriptsize \textpm 1.50}          & 56.77{\scriptsize \textpm 1.38}          \\
        Dropout                 & 4.58{\scriptsize \textpm 0.40}          & 2.47{\scriptsize \textpm 0.21}          & 1.01{\scriptsize \textpm 0.13}          & 66.58{\scriptsize \textpm 0.98}          & 61.77{\scriptsize \textpm 0.86}          & 56.53{\scriptsize \textpm 0.92}          \\
        VIB                     & 3.64{\scriptsize \textpm 0.20}          & 1.98{\scriptsize \textpm 0.13}          & 0.87{\scriptsize \textpm 0.07}          & 63.69{\scriptsize \textpm 1.11}          & 59.76{\scriptsize \textpm 0.95}          & 54.73{\scriptsize \textpm 0.67}          \\
        NIB                     & 4.29{\scriptsize \textpm 0.27}          & 2.34{\scriptsize \textpm 0.18}          & 0.95{\scriptsize \textpm 0.07}          & 66.16{\scriptsize \textpm 0.72}          & 61.49{\scriptsize \textpm 0.99}          & 55.71{\scriptsize \textpm 0.64}          \\
        DIB                     & 3.86{\scriptsize \textpm 0.45}          & 1.97{\scriptsize \textpm 0.12}          & 0.86{\scriptsize \textpm 0.06}          & 66.20{\scriptsize \textpm 0.75}          & 61.43{\scriptsize \textpm 0.94}          & 55.63{\scriptsize \textpm 0.71}          \\
        PIB                     & 4.35{\scriptsize \textpm 0.24}          & 2.29{\scriptsize \textpm 0.17}          & 0.96{\scriptsize \textpm 0.08}          & 66.16{\scriptsize \textpm 0.97}          & 61.59{\scriptsize \textpm 0.63}          & 55.93{\scriptsize \textpm 0.53}          \\
        RIB                     & \textbf{3.47{\scriptsize \textpm 0.29}} & \textbf{1.82{\scriptsize \textpm 0.12}} & \textbf{0.85{\scriptsize \textpm 0.08}} & \textbf{62.88{\scriptsize \textpm 0.81}} & \textbf{58.99{\scriptsize \textpm 1.04}} & \textbf{54.34{\scriptsize \textpm 0.77}} \\
        \bottomrule
    \end{tabular}
    \caption{Comparison of the mean test error (\%) on MNIST and STL10 with different training set sizes.}
    \label{table:perf2}
\end{table*}

\subsection{Estimation of Generalization Gap on MNIST and STL10}
We verify that the recognizability of representations is able to indicate the generalization to unseen data on MNIST and STL10. As illustrated in \cref{fig:comp_gap2}, the recognizability curves are still able to keep track of the generalization gaps as well as the f-CMI bounds.
The visualization results of the achievable region corresponding to the recognizability of representations are illustrated in \cref{fig:recog_vis2}.
We can still find that the better the generalization performance, the smaller the area of achievable regions, and the area corresponding to MNIST is significantly smaller than that of STL10 since the former is much easier to learn, while the latter has fewer samples making the model prone to over-fitting.

% \begin{figure}
%     \centering
%     \includegraphics[width=\columnwidth]{fig/svhn_loss_curve.pdf}
%     \caption{Mean test risk curves on SVHN with training set sizes of 1250 and 20000, respectively.  ``RIB-adv'' represents optimizing the RIB objective with adversarial training.}
%     \label{fig:risk_curve_svhn}
% \end{figure}

\begin{figure}
    \centering
    \begin{subfigure}[b]{0.75\columnwidth}
        \centering
        \includegraphics[width=\textwidth]{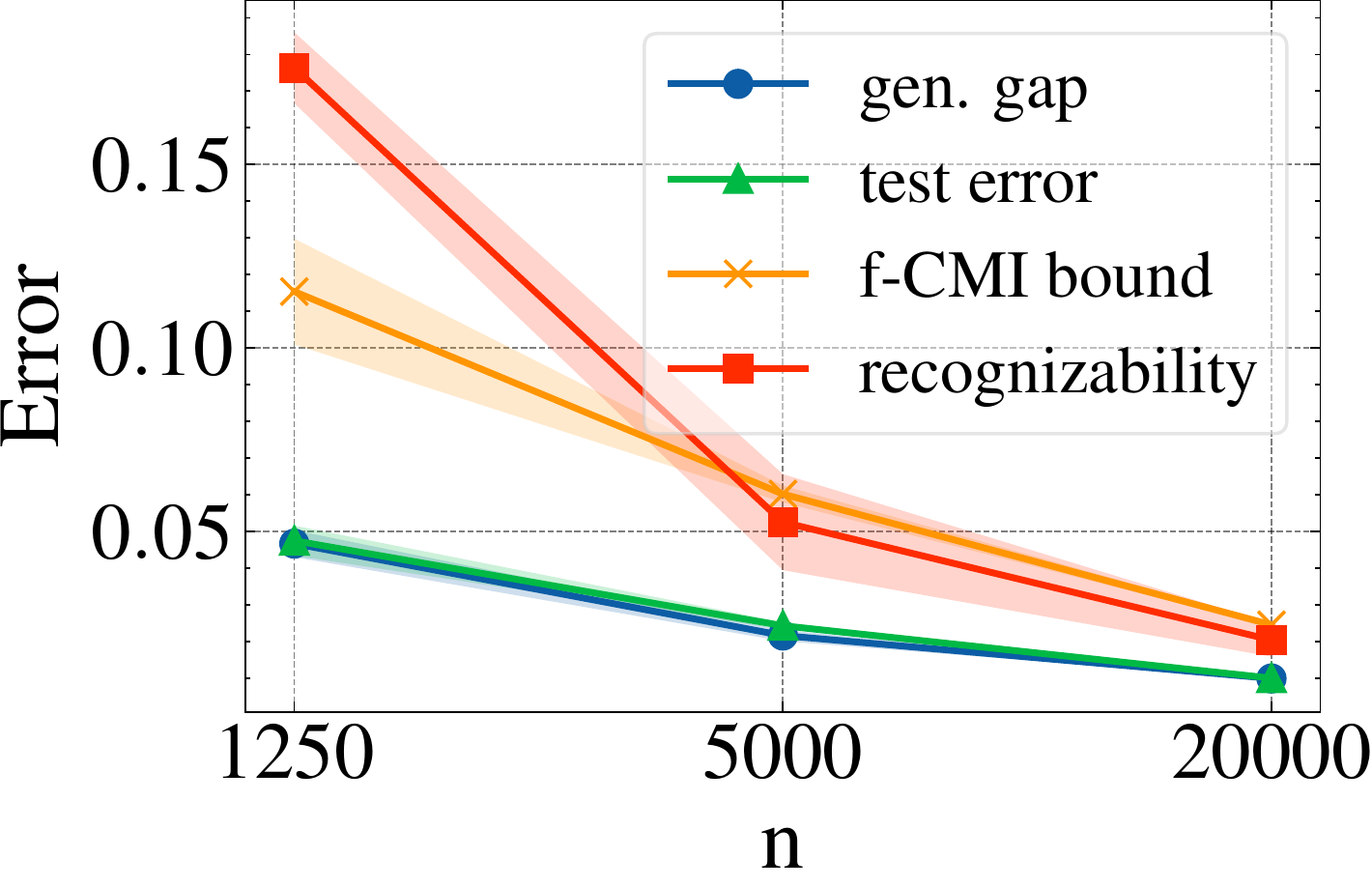}
        \caption{MNIST}
        \label{fig:gen_mnist}
    \end{subfigure}
    \begin{subfigure}[b]{0.75\columnwidth}
        \centering
        \includegraphics[width=\textwidth]{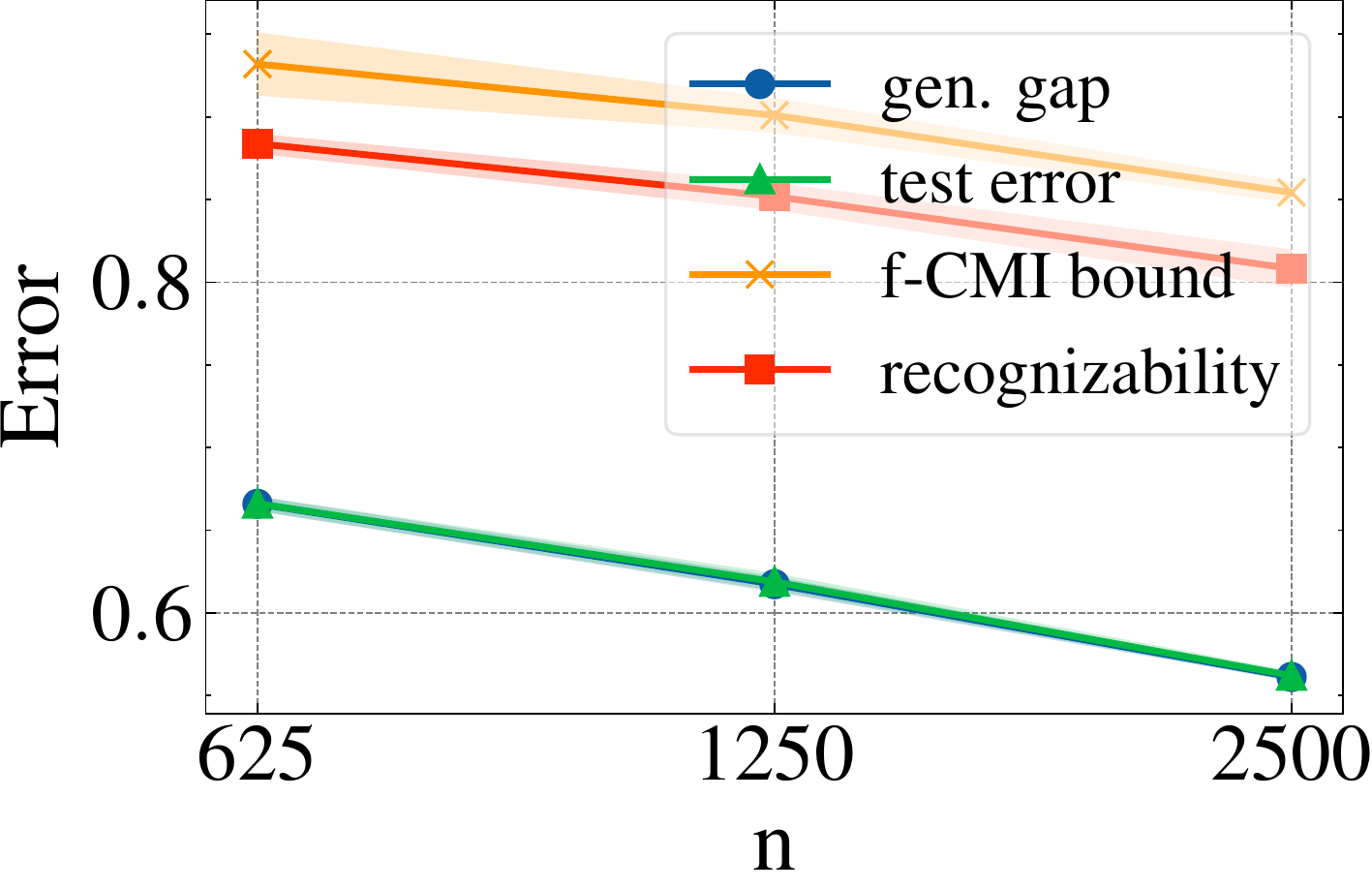}
        \caption{STL10}
        \label{fig:gen_stl10}
    \end{subfigure}
    \caption{Comparison of expected generalization gap, mean test error, f-CMI bound and recognizability of representations on MNIST and STL10 with varying training set sizes.}
    \label{fig:comp_gap2}
\end{figure}

\begin{figure}
    \centering
    \begin{subfigure}[b]{0.3\textwidth}
        \centering
        \includegraphics[width=\textwidth]{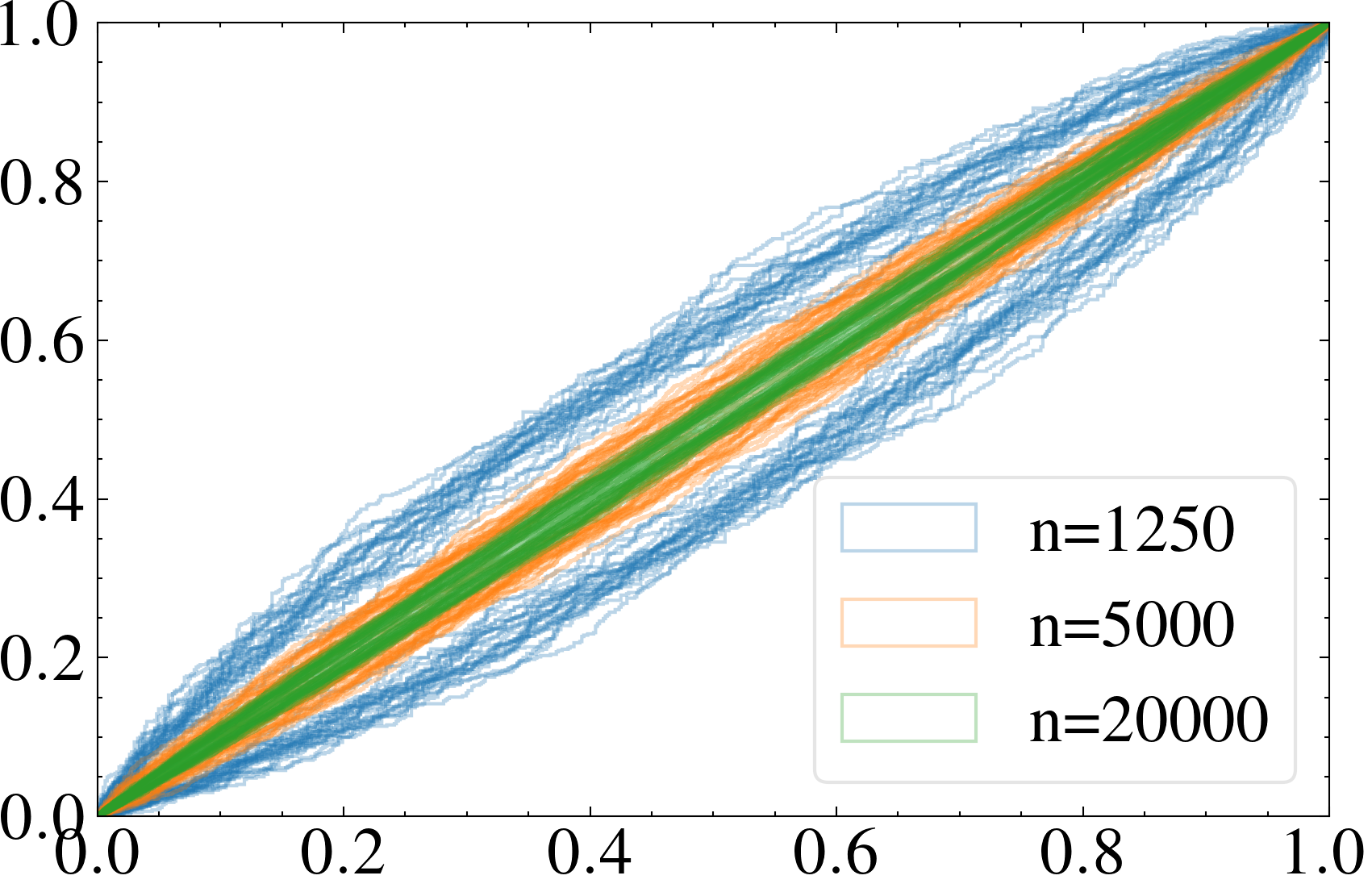}
        \caption{MNIST}
        \label{fig:recog_mnist}
    \end{subfigure}
    \begin{subfigure}[b]{0.3\textwidth}
        \centering
        \includegraphics[width=\textwidth]{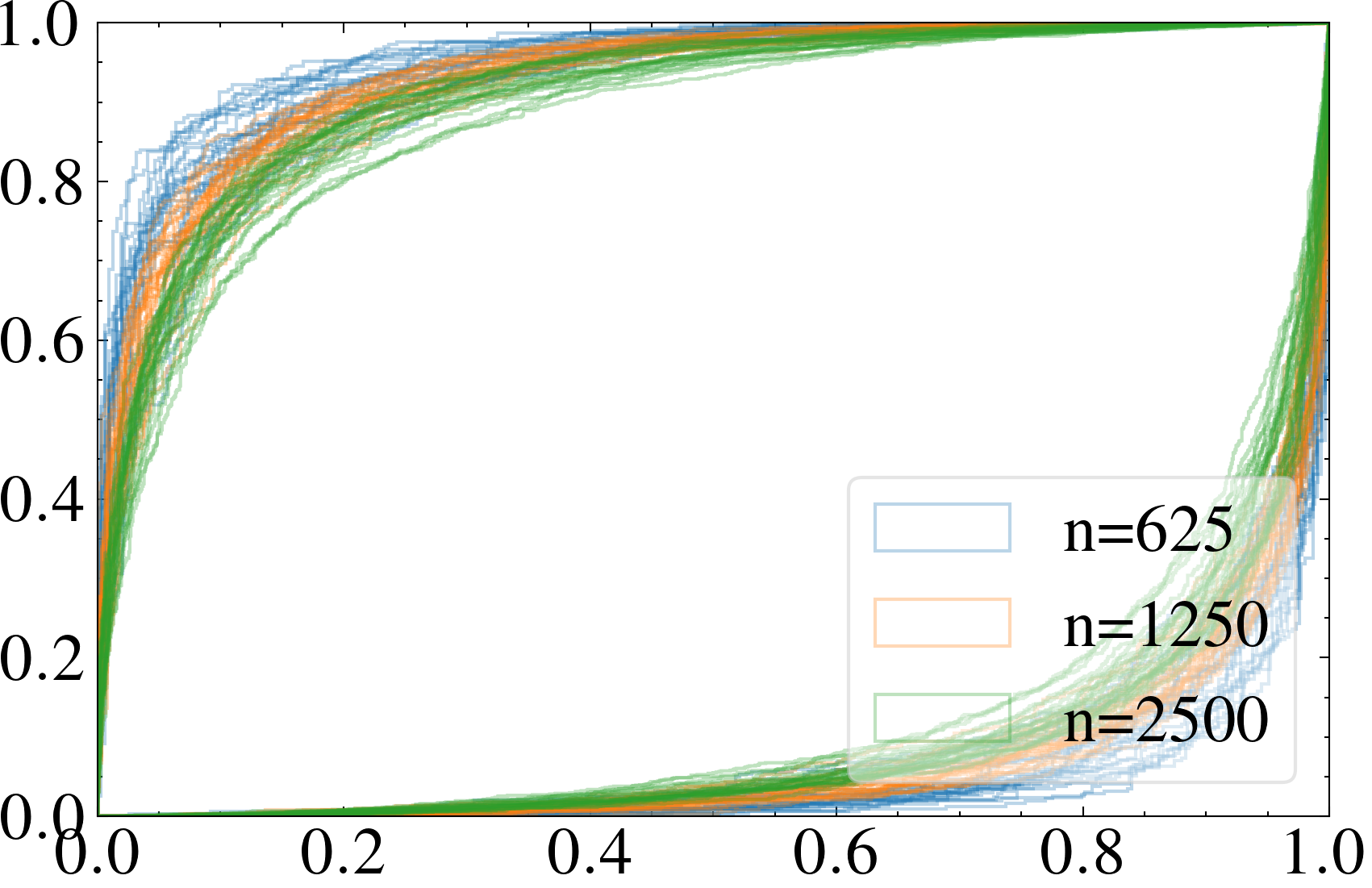}
        \caption{STL10}
        \label{fig:recog_stl10}
    \end{subfigure}
    \caption{Visualization of the achievable region corresponding to the recognizability of representations on MNIST and STL10 with varying training set sizes. Each polygon represents one run of the experiment. Smaller polygons indicate less recognizability.}
    \label{fig:recog_vis2}
\end{figure}

\subsection{Recognizability Dynamics}
We report the dynamics of recognizability on CIFAR10 with training set sizes of 5000 and 20000 in \cref{fig:epoch}.

\begin{figure}
    \centering
    \begin{subfigure}[b]{0.75\columnwidth}
        \centering
        \includegraphics[width=\textwidth]{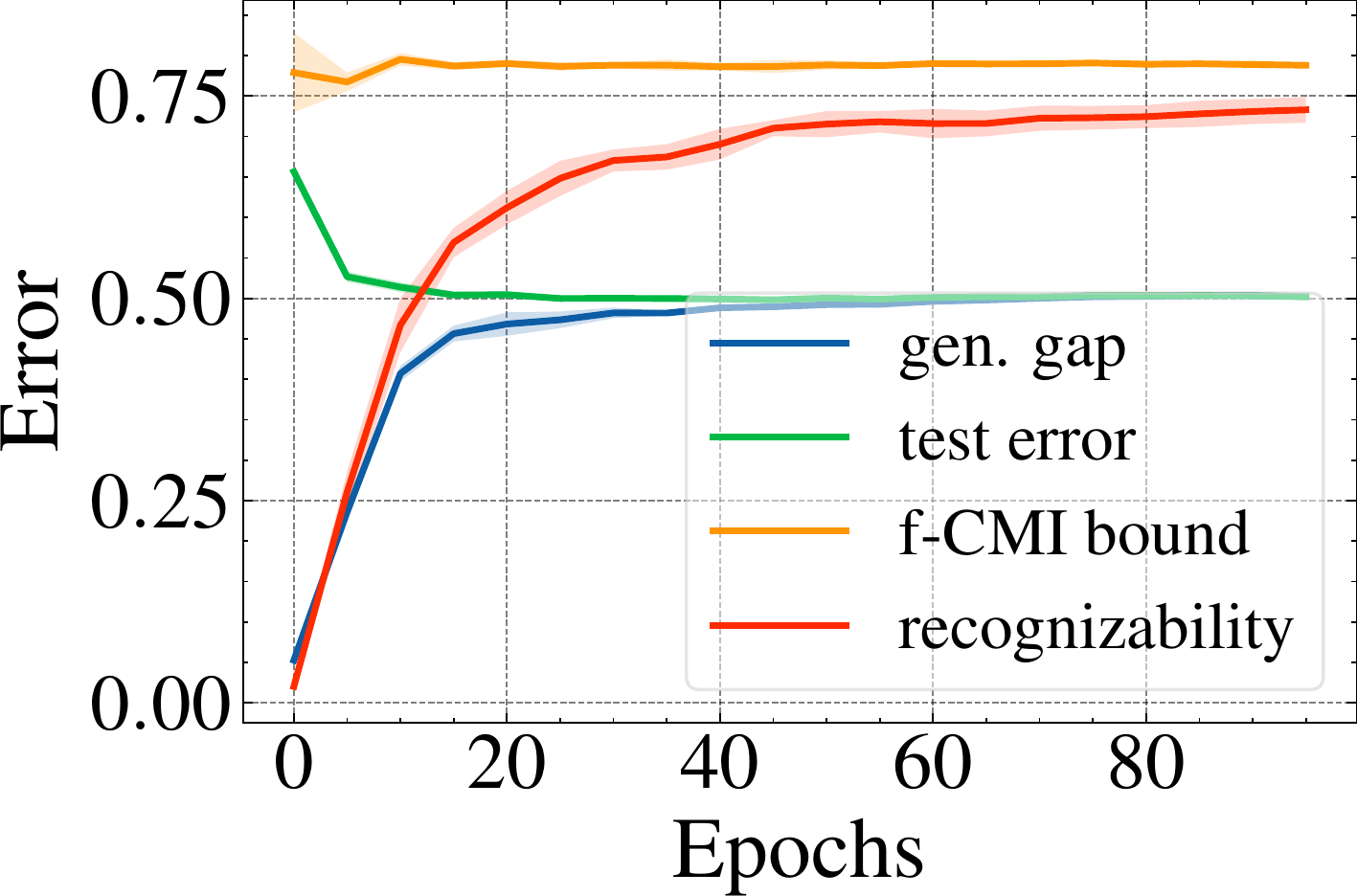}
        \caption{$n=5000$}
    \end{subfigure}
    \begin{subfigure}[b]{0.75\columnwidth}
        \centering
        \includegraphics[width=\textwidth]{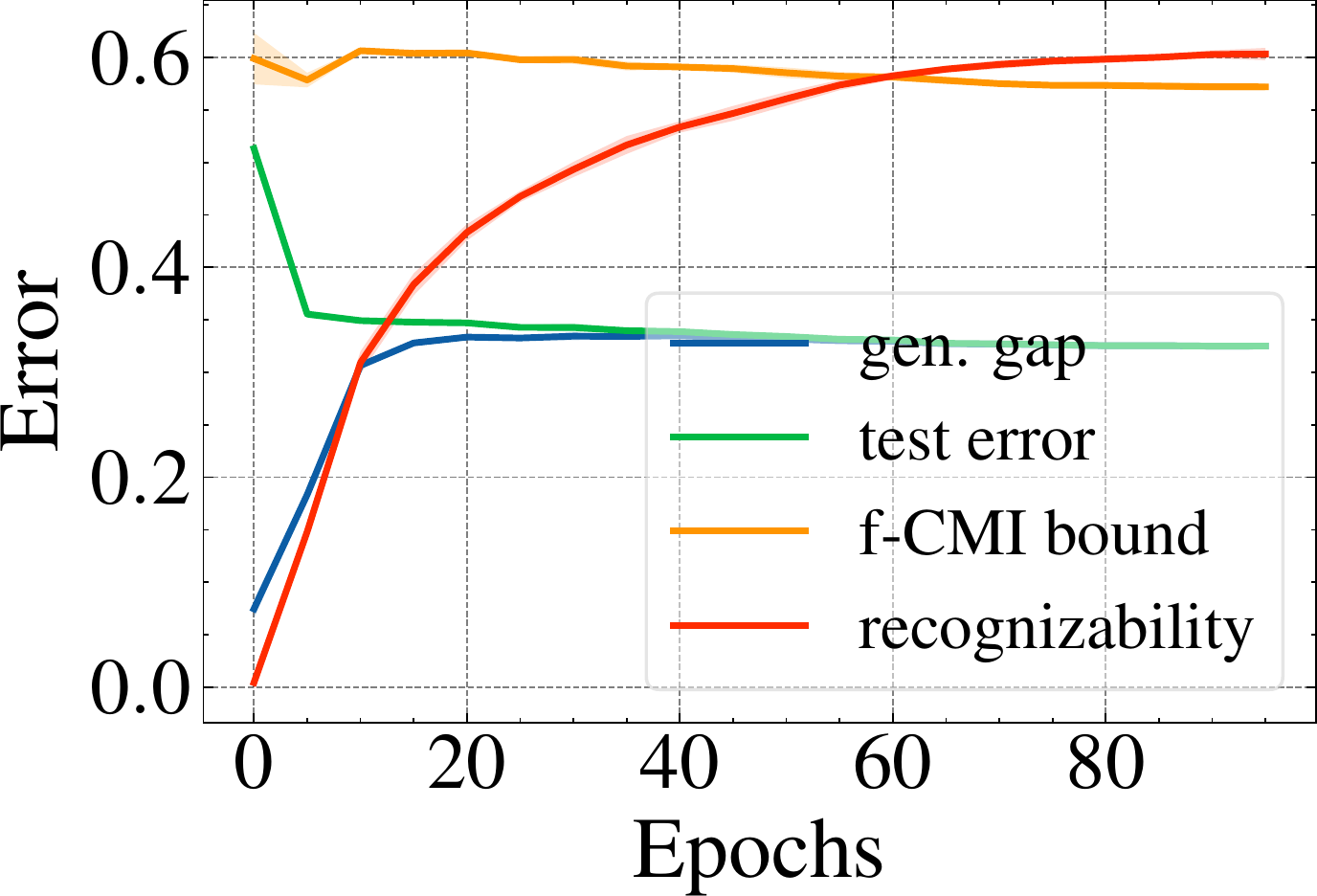}
        \caption{$n=20000$}
    \end{subfigure}
    \caption{Dynamics of recognizability on CIFAR10 with training set sizes of 5000 and 20000.}
    \label{fig:epoch}
\end{figure}

\section{Computational Complexity Analysis}
Table \ref{table:time} presents our computational complexity analysis, where $m$ is the number of data points, $d$ and $d^{\prime}$ are the dimension of the encoder network and the recognizability critic, respectively, and $c$ is the number of iterations used to estimate the Fisher information matrix. Note that the raw PIB objective requires a number of approximations to make the computational complexity affordable, due to the use of second-order information. Nevertheless, the computational complexity of the PIB is still higher than that of the RIB.
\begin{table}[ht]
	\centering
	\begin{adjustbox}{width=\columnwidth}
		\begin{tabular}{cccc}
			\toprule
			SGD               & PIB (w/o approx.)      & PIB                   & RIB                           \\
			\midrule
			$\mathcal{O}(md)$ & $\mathcal{O}(md+md^2)$ & $\mathcal{O}(md+cmd)$ & $\mathcal{O}(md+md^{\prime})$ \\
			\bottomrule
		\end{tabular}
	\end{adjustbox}
	\caption{Comparison of computational complexity per step.}
	\label{table:time}
\end{table}

\section*{Acknowledgments}
This work was partly supported by the National Natural Science Foundation of China under Grant 62176020; the National Key Research and Development Program (2020AAA0106800); the Beijing Natural Science Foundation under Grant L211016; the Fundamental Research Funds for the Central Universities (2019JBZ110); Chinese Academy of Sciences (OEIP-O-202004); Grant ITF MHP/038/20, Grant CRF 8730063, Grant RGC 14300219, 14302920.

%% The file named.bst is a bibliography style file for BibTeX 0.99c
\bibliographystyle{named}
\bibliography{ijcai23}

\end{document}